\definecolor{red_plot}{HTML}{F30522}
\definecolor{first_plot}{HTML}{F30522}
\definecolor{original_plot}{HTML}{F30522}
\definecolor{orange_plot}{HTML}{FA5F22}
\definecolor{yellow_plot}{HTML}{DED712}
\definecolor{fourth_plot}{HTML}{DED712}
\definecolor{pink_plot}{HTML}{FFD0B4}
\definecolor{light_blue_plot}{HTML}{D1E5F0}
\definecolor{blue_plot}{HTML}{67A9CF}
\definecolor{dark_blue_plot}{HTML}{2166AC}
\definecolor{third_plot}{HTML}{2166AC}
\definecolor{tabled_plot}{HTML}{2166AC}
\definecolor{green_plot}{HTML}{20D525}
\definecolor{second_plot}{HTML}{20D525}
\newcommand{\resizeGraphFactor}{0.80}
\newcommand{\aspmcr}{$\mathrm{aspmc}^r$}
\newcommand{\impl}{\ {:\!-}\  } 
\DeclareMathOperator{\IFP}{\mathit{IFP}}
\DeclareMathOperator{\WFM}{\mathit{WFM}}
\DeclareMathOperator{\OT}{\mathit{OT}}
\DeclareMathOperator{\OF}{\mathit{OF}}
\DeclareMathOperator{\lfp}{\mathit{lfp}}
\DeclareMathOperator{\gfp}{\mathit{gfp}}
\DeclareMathOperator{\lowerprob}{\underline{P}}
\DeclareMathOperator{\upperprob}{\overline{P}}
\newtheorem{example}{Example}
\newtheorem{definition}{Definition}
\newtheorem{theorem}{Theorem}
\newtheorem{lemma}{Lemma}
\begin{document}

\lefttitle{Cambridge Author}

\jnlPage{1}{16}
\jnlDoiYr{2024}
\doival{10.1017/xxxxx}

\title[Fast Inference for Probabilistic Answer Set Programs]{Fast Inference for Probabilistic Answer Set Programs via the Residual Program}

\begin{authgrp}
\author{\gn{Damiano} \sn{Azzolini} }
\affiliation{Department of Environmental and Prevention Sciences -- University of Ferrara, Ferrara, Italy \\
\email{damiano.azzolini@unife.it}}
\author{\gn{Fabrizio} \sn{Riguzzi} }
\affiliation{Department of Mathematics and Computer Science -- University of Ferrara, Ferrara, Italy\\
\email{fabrizio.riguzzi@unife.it}}
\end{authgrp}

\history{\sub{xx xx xxxx;} \rev{xx xx xxxx;} \acc{xx xx xxxx}}

\maketitle

\begin{abstract}
When we want to compute the probability of a query from a Probabilistic Answer Set Program, some parts of a program may not influence the probability of a query, but they impact on the size of the grounding.
Identifying and removing them is crucial to speed up the computation.
Algorithms for SLG resolution offer the possibility of returning the residual program which  can be used for computing answer sets for normal programs that do have a total
well-founded model.  The residual program does not contain the parts of the program that do not influence the probability.
In this paper, we propose to exploit the residual program for performing inference.
Empirical results on graph datasets show that the approach leads to significantly faster inference.
The paper has been accepted at the ICLP2024 conference and under consideration in Theory and Practice of Logic Programming (TPLP).
\end{abstract}

\begin{keywords}
Probabilistic Answer Set Programming, Statistical Relational Artificial Intelligence, Inference, Tabling.
\end{keywords}

\maketitle

\section{Introduction}
\label{sec:introduction}
Statistical Relational Artificial Intelligence~\citep{raedt2016statistical} is a subfield of Artificial Intelligence aiming at representing uncertain domains with interpretable languages.
One of these languages is Probabilistic Answer Set Programming (PASP) under the credal semantics, i.e., Answer Set Programming (ASP, and we use the same acronym to denote answer set programs) extended with probabilistic facts.
Inference in PASP 
often requires grounding the whole program, due to the model driven ASP solving approach.
However, other formalisms based on query driven languages, such as PITA~\citep{riguzzi2011pita} and ProbLog2~\citep{dries2015problog2}, only ground the relevant part of the program.
For a specific class of ASP, namely normal programs without odd loops over negation, we propose to extract the relevant program using SLG resolution~\citep{DBLP:journals/jacm/ChenW96}, that offers the possibility of returning the \emph{residual program}, which can then be used to compute the answer sets of the program.
At a high level, the process is the following: first, we convert a PASP into a Prolog program that is interpreted under the Well-founded semantics.
Then, we leverage SLG resolution via tabling to compute the residual program for a given query.
Lastly, we convert the residual program into a PASP, often smaller than the original PASP, and call a solver to compute the probability of the query.  
In this way we reduce the size of the program that should be grounded, consistently speeding up the execution time, as demonstrated by different experiments on graph datasets.

The paper is structured as follows: Section~\ref{sec:background} provides some background knowledge, Section~\ref{sec:residual_program_extraction} introduces our solution to extract the residual program, which is tested in Section~\ref{sec:experiments}.
Section~\ref{sec:related} discusses related works and Section~\ref{sec:conclusions} concludes the paper.

\section{Background}
\label{sec:background}
In this paper, we consider normal logic programs, i.e., programs composed of normal rules of the form $r=h \impl b_1, \dots, b_m, not\ c_1,\ldots, not\ c_m$, where $h$, $b_i$  for $i=1,\ldots,m $ and $c_j$ for $j=1,\ldots,n$ are atoms.
Given a rule $r$, we call $H(r)=h$, $B^+(r)=\{b_1,\ldots,b_m\}$, $B^-(r)=\{c_1,\ldots,c_n\}$, $B(r)=\{b_1, \dots, b_m, not\ c_1,\ldots, not\ c_m\}$ the head, positive body, negative body and body of $r$.
A rule with an empty body is called fact.
We indicate the Herbrand base of a program $P$ with $B_P$ and its grounding with $ground(P)$.
We use the standard notation $name/arity$ to denote a predicate with name $name$ and arity, i.e., number of arguments, $arity$.
The \emph{call graph} of a program $P$ is a directed graph with one node for each predicate in the program.
There is an edge between a predicate $p/n$ and a predicate $q/m$ if $p/n$ is the predicate of the head atom of a rule and $q/m$ is the predicate of a literal in the body of that rule.
The edge is labeled as positive ($+$) or negative ($-$) depending on whether the literal is positive or negative in the body of the considered rule.
A program $P$ includes \emph{Odd Loops Over Negation} (OLON) if its call graph contains a cycle with an odd number of negations.
Figure~\ref{fig:call_graph} shows examples of programs with and without OLON, together with their call graphs.
The \emph{dependency graph} of a program $P$ is a directed graph with one node for each atom in the Herbrand base of the program.
There is an edge between an atom $a$ and an atom $b$ if $a$ is the head of a rule in the grounding of $P$ and $b$ is the atom of a literal in the body of that rule.
A semantics is \emph{relevant}~\citep{10.5555/2383266.2383269} if the truth value of an atom $a$ depends only from the truth value of the atoms of the \emph{relevant sub-graph} of the dependency graph, i.e., the sub-graph  that contains the nodes that are reachable from $a$.

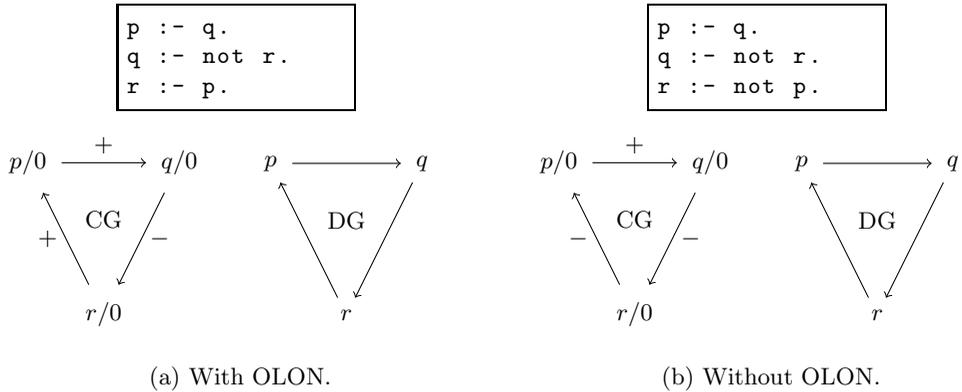
\begin{figure}[t]
\begin{subfigure}{0.48\textwidth}
  \begin{minipage}{0.25\textwidth}
    ~
  \end{minipage}
\begin{minipage}[c]{0.45\textwidth}
  \begin{lstlisting}
p :- q.
q :- not r.
r :- p.
  \end{lstlisting}
  \end{minipage}
  \\
  \begin{minipage}[c]{0.48\textwidth}
    \begin{tikzpicture}
      \node[circle, inner sep=3pt] (p) at (0,0) {$p/0$};
      \node[circle, inner sep=3pt] (q) at (2,0) {$q/0$};
      \node[circle, inner sep=3pt] (r) at (1,-2) {$r/0$};
      \node[circle, inner sep=3pt] (CG) at (1,-0.75) {CG};
    
    
      \draw[->] (p) -- node[above] {$+$} (q);
      \draw[->] (q) -- node[right] {$-$} (r);
      \draw[->] (r) -- node[left] {$+$} (p);
    \end{tikzpicture}
  \end{minipage}
  \hfill
  \begin{minipage}[c]{0.48\textwidth}
    \begin{tikzpicture}
      \node[circle, inner sep=3pt] (p) at (0,0) {$p$};
      \node[circle, inner sep=3pt] (q) at (2,0) {$q$};
      \node[circle, inner sep=3pt] (r) at (1,-2) {$r$};
      \node[circle, inner sep=3pt] (DG) at (1,-0.75) {DG};

    
      \draw[->] (p) -- node[above] {} (q);
      \draw[->] (q) -- node[right] {} (r);
      \draw[->] (r) -- node[left] {} (p);
    \end{tikzpicture}
  \end{minipage}
  \caption{With OLON.}
  \label{subfig:olon}
\end{subfigure}
  \hfill
  \begin{subfigure}{0.48\textwidth}
    \begin{minipage}{0.25\textwidth}
      ~
    \end{minipage}
  \begin{minipage}[c]{0.45\textwidth}
    \begin{lstlisting}
p :- q.
q :- not r.
r :- not p.
    \end{lstlisting}
    \end{minipage}
    \\
  \begin{minipage}[c]{0.48\textwidth}
    \begin{tikzpicture}
      \node[circle, inner sep=3pt] (p) at (0,0) {$p/0$};
      \node[circle, inner sep=3pt] (q) at (2,0) {$q/0$};
      \node[circle, inner sep=3pt] (r) at (1,-2) {$r/0$};
      \node[circle, inner sep=3pt] (CG) at (1,-0.75) {CG};
    
    
      \draw[->] (p) -- node[above] {$+$} (q);
      \draw[->] (q) -- node[right] {$-$} (r);
      \draw[->] (r) -- node[left] {$-$} (p);
    \end{tikzpicture}
  \end{minipage}
  \hfill
  \begin{minipage}[c]{0.48\textwidth}
    \begin{tikzpicture}
      \node[circle, inner sep=3pt] (p) at (0,0) {$p$};
      \node[circle, inner sep=3pt] (q) at (2,0) {$q$};
      \node[circle, inner sep=3pt] (r) at (1,-2) {$r$};
      \node[circle, inner sep=3pt] (DG) at (1,-0.75) {DG};

    
      \draw[->] (p) -- node[above] {} (q);
      \draw[->] (q) -- node[right] {} (r);
      \draw[->] (r) -- node[left] {} (p);
    \end{tikzpicture}
    \end{minipage}
    \caption{Without OLON.}
    \label{subfig:no_olon}
  \end{subfigure}
\caption{Programs, call graphs (CD), and dependency graphs (DG) with (Figure~\ref{subfig:olon}) and without (Figure~\ref{subfig:no_olon}) OLON.
The dependency graph of both programs is the same.
They only differ in the call graph: for the left program, the call graph contains an edge labeled with $-$ (negative) while for the right program the same edge is labeled with $+$ (positive).}
\label{fig:call_graph}
\end{figure}

\subsection{Stable Model Semantics}
\label{subsec:stable_model_semantics}
The Stable Model Semantics (SMS)~\citep{gelfond1988stable} associates zero or more stable models to logic programs.
An interpretation is a subset of $B_P$.
The \emph{reduct} of a ground program $P$ w.r.t. an interpretation $I$, $P^I$, also known as Gelfond-Lifschitz reduct, is the set of rules in the grounding of $P$ that have their body true in $I$, that is
$P^I=\{r\in ground(P) \mid B^+(r)\subseteq I, B^-(r)\cap I=\emptyset\}$.
A \emph{stable model} or \emph{answer set} (AS) of  a program $P$ is an interpretation $I$ such that $I$ is a minimal model under set inclusion of  $P^I$.
With $AS(P)$ we denote the set of answer sets of a program $P$.
We also consider \emph{projected answer sets}~\citep{gebser2009projective} on a set of ground atoms $V$, defined as $AS_V(P) = \{A \cap V \mid A \in AS(P)\}$.
Answer Set Programming (ASP)~\citep{brewka2011asp} considers programs under the SMS.

\begin{example}
\label{ex:asp_example}
The following program is composed by (in order of appearance) three facts and four normal rules.
\begin{lstlisting}
e(a,b). e(a,c). e(b,d).
edge(A,B):- e(A,B), not nedge(A,B).
nedge(A,B):- e(A,B), not edge(A,B).
path(A,B):- edge(A,B).
path(A,B):- edge(A,C), edge(C,B).
\end{lstlisting}
It has the following 8 answer sets (we only report the $path/2$ atoms, for brevity):
$\{\}$,
$\{path(a,c)\}$,
$\{path(a,b)\}$,
$\{path(b,d)\}$,
$\{path(a,c), path(b,d)\}$,
$\{path(a,b), path(a,c)\}$,
$\{path(a,d), path(a,b), path(b,d)\}$, and
$\{path(a,d), path(a,b), path(a,c), path(b,d)\}$.
These answer sets are exactly the ones obtained by projecting the original answer sets on the $path/2$ atoms. 
\end{example}

ASP has been extended to consider various extensions of normal logic programs: disjunction, constraints, explicit negation, and aggregates. 
The SMS for normal programs is not relevant, as is shown by the following example.
\begin{example}
\label{ex:asp_not_relevant}
Consider the query $q$ from the program \{$q \impl a.$, $a.$\}.
This program has a single AS $I=\{q,a\}$ so $q$ is true in all AS (we say that $q$ is \emph{skeptically true}).
However, if we add the constraint $c \impl not \ c$,
the program has no AS, so $q$ is no more skeptically true, even if the dependency graph of $q$ includes only the node $q$ and $a$.
\end{example}
In this paper, we restrict to normal programs without OLON.
These programs always have at least one AS and the SMS in this case is relevant~\citep{DBLP:journals/tplp/MarpleG14}.

\subsection{Well-founded Semantics}
\label{subsec:wfs}
The Well-founded Semantics (WFS)~\citep{well-founded} assigns a three valued model to a program.
A three valued interpretation $I$ is a pair $I=\langle I_T ; I_F \rangle$ where both $I_T$ and $I_F$ are disjoint subsets of $B_P$ and represent the sets of true and false atoms, respectively.
Given a three valued interpretation $I = \langle I_T ; I_F \rangle$ for a program $P$, an atom $a$ is
i) true in $I$ if $a \in I_T$ and
ii) false in $I$ if $a \in I_F$
while an atom $not \ a$ is 
i) true in $I$ if $a \in I_F$ and
ii) false in $I$ if $a \in I_T$.
If $a$ does not belong neither to $I_T$ nor $I_F$ it is  \textit{undefined}.
Furthermore, we define the functions $t(I)$, $f(I)$, and $u(I)$ returning the true, false, and undefined atoms, respectively.
Lastly, we can define a partial order on three valued interpretations as $\langle I_T ; I_F \rangle \leq \langle J_T ; J_F \rangle$ if $I_T \subseteq J_T$ and $I_F \subseteq J_F$.

We recall here the  iterated fixpoint definition of the WFS from~\cite{Przymusinski89}. Consider two sets of ground atoms, $T$ and $F$, a normal logic program $P$, and a three valued interpretation $I$.
We define the following two operators:
\begin{itemize}
  \item $\OT_{I}^{P}(T) = \{a \mid a$ is not true in $I$ and there exist a clause $h \leftarrow l_1,\dots,l_m$ of $P$ such that $a = h\theta$ for a grounding substitution $\theta$ of the clause and $\forall i \in \{1,\dots,m\}$, $l_i\theta$ is true in $I$ or $l_i\theta \in T\}$ and
  \item $\OF_{I}^{P}(F) = \{a \mid a$ is not false in $I$ and for every clause $h \leftarrow l_1,\dots,l_m$ and every grounding substitution $\theta$ of the clause of $P$ such that $a = h\theta$ there exist an $i \in \{1,\dots,m\}$ such that $l_i\theta$ is false in $I$ or $l_i\theta \in F\}$.
\end{itemize}
In other words, $\OT_{I}^{P}(T)$ is the set of atoms that can be derived from $P$ knowing $I$ and $T$ while $\OF_{I}^{P}(F)$ is the set of atoms that can be shown false in $P$ knowing $I$ and $F$.
\cite{Przymusinski89} proved that both operators are monotonic and so they have a least and greatest fixpoint ($\lfp$ and $\gfp$, respectively).
Furthermore, the iterated fixpoint operator $\IFP^P(I) = I \cup \langle \lfp(\OT_{I}^{P}), \gfp(\OF_{I}^{P}) \rangle$ has also been proved monotonic by~\cite{Przymusinski89}.
The Well-Founded model (WFM) of a normal program $P$ is the least fixpoint of $\IFP^P$, i.e., $\WFM(P)=\lfp(\IFP^P)$.
If $u(\WFM(P)) = \{\}$ (i.e., the set of undefined atoms of the WFM of $P$ is empty), the WFM is two-valued and the program is called \textit{dynamically stratified}.
%
The WFS enjoys the property of relevance, and the SMS and WFS are related since, for a normal program $P$, the WFM of $P$ is a subset of every stable model of $P$ seen as a three-valued interpretation, as proven by~\cite{well-founded}.

\subsection{SLG Resolution and Tabling}
\label{subsec:tabling}
SLG resolution was proposed by~\cite{DBLP:journals/jacm/ChenW96} and was proven sound and complete for the WFS under certain conditions.
Its implementation in the most common Prolog systems, such as XSB~\citep{DBLP:journals/tplp/SwiftW12} and SWI~\citep{wielemaker2012swi}, is based on \textit{tabling}.
In the forest of tree model of SLG resolution~\citep{DBLP:conf/epia/Swift99}, a tree is generated for each sub-goal encountered during the derivation of a query.
Nodes are of the form $\textit{fail}$ or
$$
AnswerTemplate \impl GoalList | DelayList
$$
where $AnswerTemplate$ is a (partial) instantiation of the sub-goal and $GoalList$ and $DelayList$ are lists of literals.
$DelayList$ contains a set of literals that have been \emph{delayed}, which  is needed to allow the evaluation of a query under the WFS (where the computation cannot follow a fixed order for  literal selection) with a Prolog engine (where the computation follows a fixed order for selecting literals in a rule).
An answer is  a leaf with an empty $GoalList$. It is named \emph{unconditional} if the set of delayed atoms is empty; \emph{conditional} otherwise.

The XSB and SWI implementations of SLG allow mixing it with SLDNF resolution. To obtain the SLG behavior on a predicate, the user should declare the predicate as \textit{tabled} via the directive $table/1$, and use  $tnot$ instead of $not$ or $\backslash+$ to express negation.
After the full evaluation of a query, a forest of trees is built where each leaf node is either $\textit{fail}$ or an answer. 
If there are conditional answers, we also get the \emph{residual program} $P^r_q$, i.e., the program where the head of rules have the $AnswerTemplate$ of answer nodes and the body contains
the literals of the delay list. SLG resolution is sound and complete with respect to the WFS in the sense that atoms that are instantiations of unconditional answers have value true, those that
are instantiations of sub-goals whose tree has only $\mathit{fail}$ leaves are false, and those that are instantiations of conditional answers are undefined in the WFM of the program.
Let us now show an example.
\begin{example}
\label{ex:tnot_example}
The following program defines three tabled predicates.
\begin{lstlisting}
:- table edge/2.
:- table nedge/2.
:- table path/2.
e(a,b). e(a,c). e(b,d).
edge(A,B):- e(A,B), tnot(nedge(A,B)).
nedge(A,B):- e(A,B), tnot(edge(A,B)).
path(A,B):- edge(A,B).
path(A,B):- edge(A,C), path(C,B).
\end{lstlisting}
If we query $path(a,d)$, we get the following residual program:
\begin{lstlisting}
path(a,d) :- path(b,d), edge(a,b).
path(b,d) :- edge(b,d).
edge(b,d) :- tnot(nedge(b,d)).
nedge(b,d) :- tnot(edge(b,d)).
edge(a,b) :- tnot(nedge(a,b)).
nedge(a,b) :- tnot(edge(a,b)).
\end{lstlisting}
\end{example}

\subsection{Probabilistic Answer Set Programming}
\label{subsec:pasp}
The Credal Semantics (CS)~\citep{cozman2020pasp} allows the representation of uncertain domains with ASP extended with ProbLog probabilistic facts~\citep{DBLP:conf/ijcai/RaedtKT07} of the form $p_i::a_i$ where $p_i \in [0,1]$ is a probability and  $a_i$ is a ground atom.
Such programs are called Probabilistic ASP (PASP, and we use the same acronym to denote Probabilistic Answer Set Programs).
A \textit{world} is obtained by adding to the ASP a subset of the atoms $a_i$ where $p_i::a_i$ is a probabilistic fact.
Every PASP with $n$ probabilistic facts has thus $2^n$ worlds.
The probability of a world $w$ is computed as:
$P(w) = \prod_{a_i \in w} p_i \cdot \prod_{a_i \not\in w} (1 - p_i)$.
Each world is an ASP and it may have 0 or more answer sets but, for the CS to be defined, it is required that each world has at least one AS. 
If the ASP is normal without OLON, then the CS exists.

In regular Probabilistic Logic Programming (PLP)~\citep{Rig23-BKaddress}, each world is assigned a WFM which is required to be two-valued, and the probability $P(q)$ of a ground literal (called \emph{query}) $q$ is given by the sum of the  probabilities of the worlds where $q$ is true. In PASP, each world may have more than one AS, so the question is: how to distribute the probability mass of a world among its AS?
The CS answers this question by not assuming a specific distribution but allowing all the possible ones, leading to the association of a probability interval to $q$.
The upper bound $\upperprob(q)$ and the lower bound $\lowerprob(q)$ are given by:
\begin{equation}
\label{eq:upper_lower_prob}
\upperprob(q) = \sum_{w_i \mid \exists m \in AS(w_i), \ m \models q} P(w_i),
\ \ \ 
\lowerprob(q) = \sum_{w_i \mid \forall m \in AS(w_i), \ m \models q} P(w_i).
\end{equation}
In other words, the upper probability is the sum of the probabilities of the worlds where the query is present in at least one answer set, while the lower probability is the sum of the probabilities of the worlds where the query is present in every answer set.
In the case that every world has exactly one answer set, the lower and upper probability coincide, otherwise $\upperprob(q) > \lowerprob(q)$.

To clarify, consider the following example.
\begin{example}
\label{ex:pasp_example}
The following PASP defines 3 probabilistic facts.
\begin{lstlisting}
0.1::e(a,b). 0.2::e(a,c). 0.3::e(b,d).
edge(A,B):- e(A,B), not nedge(A,B).
nedge(A,B):- e(A,B), not edge(A,B).
path(A,B):- edge(A,B).
path(A,B):- edge(A,C), path(C,B).
\end{lstlisting}
It has $2^3 = 8$ worlds, listed in Table~\ref{tab:worlds_pasp}.
Consider the query $path(a,d)$.
Call $w_5$ the world where $e(a,b)$ and $e(b,d)$ are present and $e(a,c)$ absent.
It has 4 answer sets but only one of them includes the query.
Call $w_7$ the world where all the probabilistic facts are true.
It has 8 answer sets but only two include the query.
Overall, the probability of the query is
$[0,P(w_5) + P(w_7)] = [0,0.03]$.
Note that in $w_5$ and $w_7$ the query is present only in some answer sets, so they contribute only to the upper probability. 
\end{example}

\begin{table}
\centering
\caption{Worlds and probabilities for Example~\ref{ex:pasp_example}.
The column \#q/\# A.S. contains the number of answer sets where the query $path(a,d)$ is true and the total number of answer sets.}
\label{tab:worlds_pasp}
{\tablefont\begin{tabular}{@{\extracolsep{\fill}}c  c  c  c  c  c }
    \topline
id & $e(a,b)$ & $e(a,c)$ & $e(b,d)$ & \#q/\# A.S. & Probability
\midline
$w_0$ & 0 & 0 & 0 & 0/0 & $(1-0.1) \cdot (1-0.2) \cdot (1-0.3) = 0.504$ \\
$w_1$ & 0 & 0 & 1 & 0/2 & $(1-0.1) \cdot (1-0.2) \cdot 0.3 = 0.216$ \\
$w_2$ & 0 & 1 & 0 & 0/2 & $(1-0.1) \cdot 0.2 \cdot (1-0.3) = 0.126$ \\ 
$w_4$ & 0 & 1 & 1 & 0/4 & $(1-0.1) \cdot 0.2 \cdot 0.3 = 0.054$ \\
$w_3$ & 1 & 0 & 0 & 0/2 & $0.1 \cdot (1-0.2) \cdot (1-0.3) = 0.056$ \\
$w_5$ & 1 & 0 & 1 & 1/4 & $0.1 \cdot (1-0.2) \cdot 0.3 = 0.024$ \\ 
$w_6$ & 1 & 1 & 0 & 0/4 & $0.1 \cdot 0.2 \cdot (1-0.3) = 0.014$ \\
$w_7$ & 1 & 1 & 1 & 2/8 & $0.1 \cdot 0.2 \cdot 0.3 = 0.006$ 
\botline
\end{tabular}
}
\end{table}

Several approaches exist to perform inference in PASP, such as projected answer set enumeration~\citep{AzzBellRig2022PASTA} or Second Level Algebraic Model Counting (2AMC)~\citep{DBLP:journals/tplp/KieselTK22}.
Here we focus on the latter since it has been proved more effective~\citep{AzzRig2023-AIXIA-IC}.
The components of a 2AMC problem are:
a propositional theory $T$ whose variables are divided into two disjoint sets, $X_o$ and $X_i$, 
two commutative semirings $R^{i} = (D^i, \oplus^i, \otimes^i, n_{\oplus^i}, n_{\otimes^i})$ and $R^{o} = (D^o, \oplus^o, \otimes^o, n_{\oplus^o}, n_{\otimes^o})$,
two weight functions, $w_i: lit(X_i)\rightarrow D^i$ and $w_o: lit(X_o)\rightarrow D^o$, 
and a transformation function $f: D^i\rightarrow D^o$, where $lit(X)$ is the set of literals build on the variables from $X$.
2AMC is represented as:
\begin{equation}
  \label{eq:2amc}
  \begin{split}  
    2AMC(T) =& 
    \bigoplus\nolimits_{I_{o} \in \mu(X_{o})}^{o} 
    \bigotimes\nolimits^{o}_{a \in I_{o}} 
    w_{o}(a) 
    \otimes^{o}
    f(
      \bigoplus\nolimits_{I_{i} \in \varphi(\Pi \mid I_{o})}^{i} \bigotimes\nolimits^{i}_{b \in I_{i}} w_{i}(b)  
    )  
  \end{split}
\end{equation}
where $\mu(X_{o})$ is the set of assignments to the variables in $X_{o}$ and $\varphi(T \mid I_{o})$ is the set of assignments to the variables in $T$ that satisfy $I_{o}$.
In other words, the 2AMC task requires solving an Algebraic Model Counting (AMC)~\citep{10.1016/j.jal.2016.11.031} task on the variables $X_i$ for each assignment of the variables $X_o$.
The probability of a query $q$ in a PASP $P$ can be computed by translating $P$ into a propositional theory $T$ with exactly the same models as the AS of $P$ 
and using~\citep{AzzRig2023-AIXIA-IC} 
i) $\mathcal{R}^{i} = (\mathbb{N}^2, +, \cdot, (0,0), (1,1))$ with $w_i$ mapping $not \ q$ to $(0, 1)$ and all other literals to $(1, 1)$;
ii) as transformation function $f(n_1,n_2)$ computing $(v_{lp},v_{up})$ where $v_{lp} = 1$ if $n_1 = n_2$, 0 otherwise, and $v_{up} = 1$ if $n_1 > 0$, 0 otherwise, and, 
iii) $\mathcal{R}^{o} = ([0, 1]^2, +, \cdot,(0, 0),(1, 1))$, with $w_o$ associating $(p, p)$ and $(1 - p, 1 - p)$ to $a$ and $not \ a$, respectively, for every probabilistic fact $p :: a$ and $(1, 1)$ to all the remaining literals.
Here $X_o$ contains all the probabilistic atoms and $X_i$ contains the remaining atoms from $B_P$.
aspmc~\citep{DBLP:conf/kr/EiterHK21} is a tool that can solve 2AMC via knowledge compilation~\citep{DBLP:journals/jair/DarwicheM02} targeting negation normal form (NNF) formulas.
One measure to assess the size of a program, and so its complexity, is the treewidth~\citep{bodlaender1993tourist}, which represents how distant is the considered formula graph from being a tree.
The treewidth of a graph can be obtained via tree decomposition, a process that generates a tree starting from a graph, where nodes are assigned to bags, i.e., subsets of vertices.
A graph may have more than one tree decomposition but its treewidth $t$ is the minimum integer $t$ such that there exists a tree decomposition whose bags have size at most $t+1$.

\section{Extracting the Residual Program for PASP}
\label{sec:residual_program_extraction}

From Example~\ref{ex:pasp_example}, we can see that the probability of the query $path(a,d)$ is not influenced by the probabilistic fact $e(a,c)$.
Let us call $P(e(a,b)) = p_0$, $P(e(a,c)) = p_1$, and $P(e(b,d)) = p_2$, for brevity.
The upper probability of $path(a,d)$ is computed as $P(w_5) + P(w_7) = p_0 \cdot (1-p_1) \cdot p_2 +  p_0 \cdot p_1 \cdot p_2 = (p_0 \cdot p_2) \cdot ((1-p_1) + p_1) = p_0 \cdot p_2$, so, the value of $p_1$ is irrelevant, and the probabilistic fact $e(a,c)$ can be removed from the program.
However, during the grounding process, the probabilistic fact is still considered, increasing the size of the grounding.
The same happens with rules that do not influence the probability of a query.
While the programmer should take care of writing a compact program, encoding exactly the minimal information needed to answer a query, this is usually difficult to do.
Consider again Example~\ref{ex:pasp_example}: here it is difficult to immediately spot that $e(a,c)$ is irrelevant to the computation of the probability of $path(a,d)$.
To overcome this, the PLP systems PITA~\citep{riguzzi2011pita} and ProbLog2~\citep{dries2015problog2} build a proof for a query containing only the rules that are actually involved in the probability computation.
This is possible in PLP since it enjoys the property of relevance.
However, the SMS for normal programs without OLON also enjoys the property of relevance, so we aim to do the same in PASP by exploiting the residual program.

We first provide a definition and two results regarding the residual program.

\begin{definition}
\label{def:wf-red}
Given a normal program $P$, the $\mathit{WF}$ reduct of $P$, indicated with $P^\mathit{WF}$, is obtained by removing from $ground(P)$ the rules with the body false in $\WFM(P)$ and by removing from the body of the remaining rules the literals that are true in $\WFM(P)$.
\end{definition}
\begin{lemma}
\label{lem:wf-red-equiv}
Given a normal program $P$, $AS(P)=AS(P^\mathit{WF})$.
\end{lemma}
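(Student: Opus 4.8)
The plan is to show mutual inclusion $AS(P) \subseteq AS(P^{\mathit{WF}})$ and $AS(P^{\mathit{WF}}) \subseteq AS(P)$ by arguing that, for every interpretation $I$ extending the true part of the well-founded model and avoiding its false part, the Gelfond--Lifschitz reducts $P^I$ and $(P^{\mathit{WF}})^I$ have exactly the same minimal models, and that no interpretation failing that condition can be an answer set of either program. First I would recall the key fact, proved by \cite{well-founded} and stated in Section~\ref{subsec:wfs}, that $\WFM(P)$ viewed as a three-valued interpretation is below every stable model of $P$; concretely, every $A \in AS(P)$ satisfies $t(\WFM(P)) \subseteq A$ and $f(\WFM(P)) \cap A = \emptyset$. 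I would also note the analogous, easy fact that $P$ and $P^{\mathit{WF}}$ have the same well-founded model, since $P^{\mathit{WF}}$ is obtained exactly by the simplifications (delete rules with false body, remove true body literals) that the iterated fixpoint $\IFP^P$ already sanctions — so every answer set of $P^{\mathit{WF}}$ is likewise sandwiched by $\WFM(P)$.

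The core computation is then local to such a ``compatible'' interpretation $I$ (one with $t(\WFM(P)) \subseteq I \subseteq B_P \setminus f(\WFM(P))$). Take a ground rule $r \in ground(P)$. If $B(r)$ is false in $\WFM(P)$, then some positive body atom lies in $f(\WFM(P))$ or some negative body atom lies in $t(\WFM(P))$; in either case $r$ is dropped from $P^{\mathit{WF}}$, and I would check that $r$ also contributes nothing to $P^I$ for any compatible $I$, because that same literal makes $B(r)$ false in $I$ as well. If $B(r)$ is not false in $\WFM(P)$, let $r'$ be its residue in $P^{\mathit{WF}}$, obtained by deleting the body literals true in $\WFM(P)$. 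Since a literal true in $\WFM(P)$ is true in every compatible $I$, the rule $r$ fires in $P^I$ iff $r'$ fires in $(P^{\mathit{WF}})^I$, and when they fire they have the same head and the same positive body, so they add the same constraint to the respective reducts. Hence $P^I$ and $(P^{\mathit{WF}})^I$ have literally the same set of (non-trivial) rules up to these deletions, giving the same minimal models; in particular $I$ is a minimal model of $P^I$ iff it is a minimal model of $(P^{\mathit{WF}})^I$, i.e.\ $I \in AS(P) \iff I \in AS(P^{\mathit{WF}})$.

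To finish, I would close off the non-compatible case: if $I$ is \emph{not} compatible with $\WFM(P)$ then by the sandwich fact it is not an answer set of $P$, and by the ``same WFM'' observation it is not an answer set of $P^{\mathit{WF}}$ either, so both sides agree vacuously. Combining the two cases yields $AS(P) = AS(P^{\mathit{WF}})$. The main obstacle I anticipate is not conceptual but bookkeeping: being careful that ``removing literals true in $\WFM(P)$'' can in principle empty a rule body (turning it into a fact) and that this is exactly mirrored in $P^I$, and handling the subtle point that a body literal true in $\WFM(P)$ might be a negative literal $not\,c$ with $c \in f(\WFM(P))$ — in the reduct $P^I$ such a rule stays (its negative body is satisfied by compatibility), and in $P^{\mathit{WF}}$ it appears with $not\,c$ deleted, so one must verify the two reducts still coincide after the GL transformation strips negative bodies. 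Making the correspondence between ``firing in $P^I$'' and ``firing in $(P^{\mathit{WF}})^I$'' airtight across all four literal-status combinations (true/false positive, true/false negative in $\WFM(P)$) is where the proof needs the most care.
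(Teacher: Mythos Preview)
Your proposal is correct and follows essentially the same approach as the paper: both argue that any candidate answer set $A$ must be compatible with $\WFM(P)$ and then compare the GL reducts $P^A$ and $(P^{\mathit{WF}})^A$ rule by rule to conclude that they have the same (minimal) models. Your version is somewhat more careful in the bookkeeping---in particular, the auxiliary observation that $\WFM(P)=\WFM(P^{\mathit{WF}})$ is exactly what is needed to justify compatibility in the direction $AS(P^{\mathit{WF}})\subseteq AS(P)$, a point the paper's proof leaves implicit.
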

\begin{proof}
Consider an $A\in AS(P)$.
Then $t(\WFM(P))\subseteq A$ and $f(\WFM(P))\subseteq (B_P\setminus A)$ (see Section~\ref{subsec:wfs}).
Consider a rule $r\in P^A$ (i.e., the reduct of $P$ w.r.t. $A$).
Then $(P^\mathit{WF})^A$ contains a rule $r'$ that differs from $r$ because the body does not contain literals that are true in all answer sets and so also in $A$.
Since $r$ is satisfied in $A$, $r'$ is also satisfied in $A$. So $A$ is a model of $(P^\mathit{WF})^A$.
Moreover, $A$ is also a minimal model of $(P^\mathit{WF})^A$, because otherwise there would be at least one atom $a$ that could be removed from $A$ leading to a set $A'$ that would still be a model for $(P^\mathit{WF})^A$.
However, since $A$ was minimal for $P^A$, this means that there is a rule $r=a\impl body$ with $body$ true in $A$.
Since there would be a rule $r'=a\impl body'$ in $(P^\mathit{WF})^A$ with $body'$ still true, then $a$ cannot be removed from $A$ against the hypothesis.
So $A\in AS(P^\mathit{WF})$.

On the other hand, consider an $A\in AS(P^\mathit{WF})$ and a rule $r \in (P^\mathit{WF})^A$.
Then $ground(P)$ contains a rule $r'$ that differs from $r$ because the body contains other literals that
are true in all answer sets of $P$. Since $r$ is satisfied in $A$, $r'$ is also satisfied in $A$. So $A$ is a model of $P^A$. Moreover, $A$ is also minimal, because otherwise there would be at least one atom $a$ that could be removed from $A$ leading to a set $A'$ that would still be a model for $P^A$.
However, since $A$ was minimal for $(P^\mathit{WF})^A$, this means that there is a rule $r=a\impl body$ with $body$ true in $A$.
Since there would be a rule $r'=a\impl body'$ in $(P^\mathit{WF})^A$ with $body'$ still true, then $a$ cannot be removed from $A$ against the hypothesis. So $A\in AS(P)$.
\end{proof}

\begin{theorem}
\label{th:equivalence_projected_as}
Given a normal program $P$ without OLON together with its residual program $P^r_q$ for a query $q$, the  answer sets projected  onto the Herbrand base $B_{P^r_q}$ of $P^r_q$ coincide with the answer sets of $P^r_q$, i.e.,
$$
AS_{B_{P^r_q}}(P) = AS(P^r_q).
$$
\end{theorem}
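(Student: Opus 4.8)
The plan is to prove the two set inclusions separately, using Lemma~\ref{lem:wf-red-equiv} to replace $P$ by its $\mathit{WF}$ reduct $P^\mathit{WF}$ and the relevance of the stable model semantics for OLON-free programs to cut away the part of $P^\mathit{WF}$ that does not survive in $P^r_q$. The bridge between the three objects $P$, $P^\mathit{WF}$ and $P^r_q$ is the soundness and completeness of SLG resolution with respect to the WFS.

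First I would characterise $B_{P^r_q}$ and the rules of $P^r_q$ in terms of $\WFM(P)$. By soundness/completeness of SLG: the template of an unconditional answer is true in $\WFM(P)$, an atom whose tree has only $\mathit{fail}$ leaves is false, the template of a conditional answer is undefined, and every literal of a delay list is built on an undefined atom (a true literal would have been resolved away, a false one would have failed the branch). Hence no atom false in $\WFM(P)$ occurs in $P^r_q$ and $B_{P^r_q}\subseteq t(\WFM(P))\cup u(\WFM(P))$. More importantly, a conditional answer $a \impl l_1,\dots,l_k$ arises from a ground rule $r_0$ of $P$ with head $a$ by resolving away the body literals that are true in $\WFM(P)$ and delaying the undefined ones, which is exactly the rule that Definition~\ref{def:wf-red} puts into $P^\mathit{WF}$; and completeness ensures every such reduced rule for an atom reached from $q$ actually shows up as a residual rule. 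So, letting $\hat P$ be the set of ground rules of $P^\mathit{WF}$ whose head lies in $B_{P^r_q}$, I claim $AS(P^r_q)=AS(\hat P)$: the two programs agree on the rules for undefined atoms, while for true atoms $P^r_q$ keeps (at worst) facts and subsumed rules, which do not change the answer sets of $\hat P$ because those atoms are already facts of $\hat P$.

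Next I would observe that $B_{P^r_q}$ is downward closed in $P^\mathit{WF}$, i.e.\ if a rule of $P^\mathit{WF}$ has head in $B_{P^r_q}$ then all its body atoms are heads of residual rules (again by the correspondence above together with completeness of SLG), so they lie in $B_{P^r_q}$. Thus $\hat P$ is exactly the ``bottom'' of $P^\mathit{WF}$ on $B_{P^r_q}$, and a splitting-set/relevance argument gives $AS_{B_{P^r_q}}(P^\mathit{WF})=AS(\hat P)$: every $M\in AS(P^\mathit{WF})$ restricts to an answer set $M\cap B_{P^r_q}$ of $\hat P$, and every $N\in AS(\hat P)$ extends to some $M\in AS(P^\mathit{WF})$ with $M\cap B_{P^r_q}=N$ provided the complementary part of $P^\mathit{WF}$, evaluated with respect to $N$, is consistent. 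This last point is where ``without OLON'' is used: any program obtained from $P$ by deleting rules and/or deleting body literals is still OLON-free, hence has at least one answer set~\citep{DBLP:journals/tplp/MarpleG14}, and this applies to $P^\mathit{WF}$, to $\hat P$, and to the evaluated complementary part. Combining with Lemma~\ref{lem:wf-red-equiv} (so that $AS_{B_{P^r_q}}(P)=AS_{B_{P^r_q}}(P^\mathit{WF})$) yields $AS_{B_{P^r_q}}(P)=AS(\hat P)=AS(P^r_q)$.

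I expect the main obstacle to be the first step: rigorously matching the SLG forest --- conditional answers, delay lists, the answer/simplification operations --- to the purely syntactic $\mathit{WF}$ reduct of Definition~\ref{def:wf-red}, in particular arguing that after full evaluation the delay list of a surviving conditional answer is precisely the set of undefined body literals of the originating ground rule, and that all the reduced rules for atoms reachable from $q$ are in fact generated. If one prefers to avoid invoking a splitting-set result explicitly, the two inclusions can instead be carried out directly, reusing almost verbatim the minimal-model reasoning of the proof of Lemma~\ref{lem:wf-red-equiv}, with the extra bookkeeping that atoms outside $B_{P^r_q}$ are fixed by the WFM and handled through relevance.
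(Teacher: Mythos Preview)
Your proposal is correct and shares the same skeleton as the paper's proof: both first invoke Lemma~\ref{lem:wf-red-equiv} to replace $P$ by $P^\mathit{WF}$, then use soundness/completeness of SLG together with relevance to show the two inclusions between $AS_{B_{P^r_q}}(P^\mathit{WF})$ and $AS(P^r_q)$.

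The difference is in how the second half is organised. The paper argues the two inclusions directly and quite tersely: it simply states that, by soundness of SLG and because SLG visits the whole relevant sub-graph, the bodies of rules in $P^r_q$ are unaffected by atoms outside $B_{P^r_q}$, so answer sets of $P^r_q$ extend to answer sets of $P^\mathit{WF}$ and restrictions of answer sets of $P^\mathit{WF}$ are answer sets of $P^r_q$. You instead factor through the intermediate program $\hat P$ (the rules of $P^\mathit{WF}$ with head in $B_{P^r_q}$), first identifying $AS(P^r_q)$ with $AS(\hat P)$ and then applying a splitting-set argument to get $AS_{B_{P^r_q}}(P^\mathit{WF})=AS(\hat P)$; you also make explicit where the OLON-free hypothesis is consumed (consistency of the top part after evaluating the bottom), which the paper leaves implicit in ``can be extended''. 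Your route is more rigorous and isolates the SLG-to-$P^\mathit{WF}$ matching that the paper glosses over; the paper's route is shorter but, as you anticipated in your last paragraph, is essentially the direct minimal-model argument you mention as the alternative.
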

\begin{proof}
$AS( P^\mathit{WF})=AS(P)$ by Lemma~\ref{lem:wf-red-equiv}, so we  prove that
$
AS_{B_{P^r_q}}(P^\mathit{WF}) = AS(P^r_q)
$.
For the soundness of SLG resolution and the fact it analyses the whole relevant sub-graph, the truth of the body of each rule $r\in P^r_q$ is not influenced by the truth value of atoms outside $B_{P^r_q}$.
Therefore, an $A\in AS(P^r_q)$ can be extended to an $A'\in AS(P^\mathit{WF})$ such that $A=A'\cap B_{P^r_q}$.
Thus, $A\in AS_{B_{P^r_q}}(P^\mathit{WF})$.
In the other direction, if $A'\in AS(P^\textit{WF})$, consider $A=A'\cap B_{P^r_q}$.
Since $B_{P^r_q}$ contains all the atoms in the relevant sub-graph, the truth of the body of each rule $r\in P^r_q$ is not influenced by the truth value of atoms outside $B_{P^r_q}$ and $A$ must be an AS of $P^r_q$.
\end{proof}
To consider PASP, we first translate a PASP into a normal program.
We convert each probabilistic fact $p::a$ into a pair of rules:
\begin{lstlisting}
a :- tnot(na).
na :- tnot(a).
\end{lstlisting} 
where $na$ is a fresh atom not appearing elsewhere in the program.
This pair of rules encode the possibility that a probabilistic fact may or may not be selected.
Then, we replace the negation symbol applied to each atom $b$ with $tnot(b)$ and declare as tabled all the predicates appearing in the program.
We extract the residual program and we replace the pair of rules mimicking probabilistic facts with the actual probabilistic fact they represent (i.e., the two rules listed in the previous box are replaced with $p::a$).
Then, we call a standard solver such as aspmc~\citep{DBLP:conf/kr/EiterHK21,EITER2024104109} or PASTA~\citep{AzzBellRig2022PASTA}.

\begin{theorem}
Given a PASP $P$ together with its residual program $P^r_q$ for a query $q$, let $\overline{P}(q)$ be the upper probability of $q$ in $P$ and $\overline{P'}(q)$ be the upper probability of $q$ in $P^r_q$.
Then 
$$
\overline{P}(q)=\overline{P'}(q).
$$
The same is true for the lower probability.
\end{theorem}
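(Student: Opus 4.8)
The plan is to move the claim into the normal-program setting and apply Theorem~\ref{th:equivalence_projected_as}. Let $\hat{P}$ be the normal program obtained from $P$ by the translation described above: each probabilistic fact $p_i::a_i$ is replaced by the pair $a_i \impl tnot(na_i)$, $na_i \impl tnot(a_i)$ with $na_i$ fresh, each $not$ is replaced by $tnot$, and all predicates are tabled. The new pairs create only even loops over negation, so $\hat{P}$ is still normal and free of OLON, and Theorem~\ref{th:equivalence_projected_as} applies to it. By construction the residual program $\hat{P}^r_q$ of $\hat{P}$ for $q$ turns into $P^r_q$ once each surviving pair of rules is rewritten as the probabilistic fact it encodes, and conversely translating $P^r_q$ gives back $\hat{P}^r_q$. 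Let $S$ be the set of indices of the probabilistic facts that survive in $P^r_q$, i.e., those $i$ with $a_i \in B_{\hat{P}^r_q}$; for every such $i$, \emph{both} rules of the pair occur in $\hat{P}^r_q$, because evaluating $a_i$ forces SLG to evaluate $na_i$ and vice versa and both atoms are undefined in $\WFM(\hat{P})$. Hence $B_{P^r_q} = B_{\hat{P}^r_q} \setminus \{na_i \mid i \in S\}$ and, since $q$ itself is evaluated, $q \in B_{P^r_q}$.

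I would then invoke the standard correspondence between a PASP and its translation: a pair of rules for a probabilistic fact contributes an independent two-valued choice ($a_i$ versus $na_i$) that does not interact with anything else because $na_i$ is fresh, so (by a reduct argument in the spirit of Lemma~\ref{lem:wf-red-equiv}) $AS(\hat{P}) = \bigcup_w \{A \cup \{na_i \mid a_i \notin w\} \mid A \in AS(w)\}$, the union ranging over the worlds $w$ of $P$ and being disjoint, with $AS(w)$ the answer sets of the ASP world $w$; the analogous identity holds for $\hat{P}^r_q$ and the worlds of $P^r_q$. Applying Theorem~\ref{th:equivalence_projected_as} to $\hat{P}$ gives $AS_{B_{\hat{P}^r_q}}(\hat{P}) = AS(\hat{P}^r_q)$, and projecting an answer set onto $B_{\hat{P}^r_q}$ restricts its probabilistic part to $w \cap \{a_i \mid i \in S\}$; the $na_i$-components then identify the blocks of the two disjoint unions, and matching them yields, for every world $w$ of $P$,
\[
\{\, A \cap B_{P^r_q} \mid A \in AS(w)\,\} \;=\; AS(w \cap \{a_i \mid i \in S\})
\]
(passing from the union form that Theorem~\ref{th:equivalence_projected_as} directly delivers to this per-world form is the delicate point, treated last). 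Since $q \in B_{P^r_q}$, it follows that $q$ belongs to some (respectively, every) answer set of the world $w$ of $P$ exactly when it belongs to some (respectively, every) answer set of the corresponding world of $P^r_q$.

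It then remains to sum over worlds. I would partition the worlds of $P$ according to their restriction $w' = w \cap \{a_i \mid i \in S\}$ to the surviving probabilistic facts; the worlds mapping to a fixed $w'$ are $w'$ extended by an arbitrary subset of the remaining probabilistic atoms, and the sum of their $P$-probabilities equals the probability of $w'$ in $P^r_q$ since the factors contributed by the remaining facts sum to $1$. Combined with the displayed equality,
\[
\overline{P}(q) = \sum_{\substack{w:\ \exists A \in AS(w),\\ A \models q}} P(w)
= \sum_{\substack{w':\ \exists A'' \in AS(w'),\\ A'' \models q}}\ \sum_{\substack{w:\\ w \cap \{a_i \mid i \in S\} = w'}} P(w)
= \overline{P'}(q),
\]
and the same computation with ``$\exists$'' replaced by ``$\forall$'' gives the corresponding identity for the lower probability.

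The hard part will be the per-world equality: viewed through the world correspondence, Theorem~\ref{th:equivalence_projected_as} by itself only gives that the \emph{union}, over all worlds sharing a given restriction $w'$, of the projected answer sets equals $AS(w')$, and this is too weak to transfer the ``$\exists$''/``$\forall$'' conditions in the direction from $P^r_q$ back to $P$. The missing ingredient is the relevance of the stable model semantics for OLON-free programs~\citep{DBLP:journals/tplp/MarpleG14}: an irrelevant probabilistic atom $a_i$ with $i \notin S$ is not reachable from $q$ in the dependency graph, so no atom reachable from $q$ --- in particular no atom of $B_{P^r_q}$ --- depends on it, and flipping the irrelevant facts therefore leaves the answer sets restricted to $B_{P^r_q}$ unchanged; hence every world with restriction $w'$ realises \emph{all} of $AS(w')$ on $B_{P^r_q}$, which is precisely what the per-world equality claims. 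A minor side point is that $P^r_q$ is itself a well-defined PASP under the credal semantics, i.e., each of its worlds has at least one answer set; this follows from $P$ being OLON-free and from SLG resolution introducing no new odd loops over negation.
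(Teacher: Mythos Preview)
Your argument is correct in outline and follows a genuinely different route from the paper. The paper proceeds by induction on the number $n$ of probabilistic facts absent from $P^r_q$: for one absent fact $p_1::a_1$, each world $w$ of $P^r_q$ splits into two worlds $v',v''$ of $P$ whose answer sets are asserted to extend those of $w$, so that $v'$ and $v''$ jointly contribute $P(w)p_1+P(w)(1-p_1)=P(w)$ to whichever bound $w$ contributes to; the step is then iterated. You instead apply Theorem~\ref{th:equivalence_projected_as} once to the translated program $\hat P$, decompose the resulting set equality along the $na_i$-coordinates, and use relevance to upgrade the union identity to a per-world identity before summing. The payoff of your route is that the crucial per-world statement $\{A\cap B_{P^r_q}\mid A\in AS(w)\}=AS(w\cap\{a_i\mid i\in S\})$ is isolated and argued for, whereas the paper's induction relies on exactly this fact at each step (in the line ``$\forall I\in\mathcal A'\cup\mathcal A'':I\supseteq A$'') without further justification.

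One step of your relevance argument needs adjustment. The implication ``$i\notin S\Rightarrow a_i$ is unreachable from $q$ in the dependency graph'' is false as stated: a ground rule $h\impl c,a_i$ with $c$ false in $\WFM(\hat P)$ leaves $a_i$ reachable from $q$ through $h$ in the dependency graph of $\hat P$, yet SLG fails at $c$ before ever opening $a_i$ as a subgoal, so $a_i\notin B_{\hat P^r_q}$. The repair is to run the reachability argument in $\hat P^{\mathit{WF}}$ (Definition~\ref{def:wf-red}) rather than in $\hat P$: there every such blocking rule has been removed, $B_{\hat P^r_q}$ genuinely coincides with the set of atoms reachable from $q$, and Lemma~\ref{lem:wf-red-equiv} carries the conclusion back to $\hat P$ and thence to the individual worlds. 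You should also treat the degenerate case $q\notin B_{P^r_q}$ separately (then $q$ is decided in $\WFM(\hat P)$ and both bounds, in $P$ and in $P^r_q$, are trivially $0$ or trivially $1$).
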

\begin{proof}
If the clauses generated for some probabilistic fact are absent from the residual program, they will not influence the probability.
Let us prove it by induction on the number $n$ of probabilistic facts whose clauses are absent.
If $n=1$ and the fact is $p_1::a_1$, consider an AS $A$ in $AS_{B_{P^r_q}}(P^r_q)$, associated to a world $w$ of $P^r_q$.
Then, there are going to be two subsets of $AS(P)$, ${\cal A}'$ and ${\cal A}''$, such that
$\forall I\in {\cal A}'\cup{\cal A}'':I\supseteq A$, $\forall I\in {\cal A}':a_1\in I$ and $\forall I\in{\cal A}'':na_1\in I$.
${\cal A}'$ is the set of AS of a world $v'$ such that $a_1\in v'$ and 
${\cal A}''$ is the set of AS of a world $v''$ such that $na_1\in v''$.
However, if $q\in A$, then $\forall I\in{\cal A}'\cup{\cal A}'':q\in I$, so $v'$ and $v''$ either both contribute to one of the probability bounds or neither does. 
The contribution, if present, would be given by $P(w)\cdot p_1+P(w)\cdot (1-p_1)=P(w)$, so the fact  $p_1::a_1$ does not influence the probability of $q$.
Now suppose the theorem holds for $n-1$ probabilistic facts whose clauses are not present and consider the $n$-th fact  $p_n::a_n$.
Let us call $P^*$ the program $P$ without the fact $p_n::a_n$.
Then $(P^*)^r_q=P^r_q$ and we can repeat the reasoning for $n=1$.
\end{proof}

\section{Experiments}
\label{sec:experiments}
We ran the experiments on a computer running at 2.40 GHz with 32 GB of RAM with cutoff times of 100, 300, and 500 seconds\footnote{Implementation and datasets are available at: \url{https://github.com/damianoazzolini/aspmc}}.

\subsection{Datasets Description}
\label{subsec:datasets_description}
We considered two datasets with two variations each and with an increasing number of instances.
The reachability (\textit{reach}) dataset models a reachability problem in a probabilistic graph.
All the instances have the rules (here we model negation with $\backslash+$, since it is the symbol adopted in aspmc):
\begin{lstlisting}
edge(X,Y):- e(X,Y), \+ nedge(X,Y).
nedge(X,Y):- e(X,Y), \+ edge(X,Y).
path(X,Y) :- edge(X,Y).
path(X,Z) :- edge(X,Y), path(Y,Z).
\end{lstlisting}
where the $e/2$ facts are probabilistic with probability 0.1.
We developed two variations for this dataset: \textit{reachBA} and \textit{reachGrid}.
The difference between the two is in the generation of the $e/2$ facts: for the former, they are generated by following a Barabasi-Albert model with initial number of nodes equal to the size of the instance and 2 edges to attach from a new node to existing nodes (these two values are respectively the values of the $n$ and $m$ parameters of the method \texttt{barabasi\_albert\_graph} of the NetworkX Python library~\citep{hagberg2008networkx} we used to generate them).
The query is $path(0,n-1)$.
For the latter, the $e/2$ facts are such that they form a two-dimensional grid.
In this case, the query is $path(0,i)$, where $i$ is a random node (different for every dataset).

The \textit{smokers} dataset contains a set of programs modeling a social network where some people smoke and others are influenced by this behavior.
Each person is indexed with a number, starting from 0.
The base program is:
\begin{lstlisting}
influences(X,Y):- e(X,Y), \+ ninfluences(X,Y).
ninfluences(X,Y):- e(X,Y), \+ influences(X,Y).
smokes(X) :- stress(X).
smokes(X) :- smokes(Y), influences(Y,X).
\end{lstlisting}
Each $stress/1$ atom is probabilistic with probability 0.1 and each $\mathit{influences}/2$ atom is probabilistic with associated probability 0.2.
Also for this dataset we consider two variations, \textit{smokersBA} and \textit{smokersGrid}, that are generated with the same structure as for \textit{reachBA} and \textit{reachGrid}, respectively.
For \textit{smokersBA} the query is $smokes(n-1)$ where $n$ is the number of person in the network, while for \textit{smokersGrid} the query is $smokes(i)$ where $i$ is a random person (different for every dataset).
For all the instances, the probability associated with probabilistic facts does not influence the time required to compute the probability of the query.

\subsection{Results}
\label{subsec:results}

\begin{table}
\centering
\caption{Values for the \textit{reachBA} and \textit{smokersBA} datasets in the format (\aspmcr - aspmc) for the tests with 500 seconds of time limit.
$\mu$ stands for (rounded) mean, tw. for treewidth, and vert. for vertices related to the dependency graph.
The column \# unsolved contains the number of unsolved instances for the specific size.}
\label{tab:tab_ba}
{\tablefont\begin{tabular}{@{\extracolsep{\fill}}c || c | c | c | c ||c | c | c | c }
\topline
\multicolumn{1}{c}{} & \multicolumn{4}{c}{\textit{reachBA} \aspmcr - aspmc} & \multicolumn{4}{c}{\textit{smokersBA} \aspmcr - aspmc} \\
\hline
size \ & \ \# unsolved \ & \ $\mu$ \# bags \ & \ $\mu$ tw. \ & \ $\mu$ \# vert. \ & \ \# unsolved \ & \ $\mu$ \# bags \ & \ $\mu$ tw. \ & \ $\mu$ \# vert. \
\midline
5   & 0 - 0  &\ 21 - 36   \ & \ 5 - 6 \     & \ 30 -  48  \     & 0 - 0   & \  29 - 38  \   & \ 9 - 12  \    & \ 44 - 55    \\ 
10  & 0 - 0  &\ 37 - 121  \ & \ 8 - 16 \    & \ 52 -  156  \    & 0 - 10  & \  40 - 86  \   & \ 13 - 27  \   & \ 65 - 135   \\ 
15  & 0 - 10 &\ 46 - 205  \ & \ 11 - 26 \   & \ 66 -  263  \    & 2 - 10  & \  38 - 113  \  & \ 14 - 42  \   & \ 71 - 215   \\ 
20  & 1 - 10 &\ 46 - 331  \ & \ 14 - 36 \   & \ 86 -  415  \    & 0 - 10  & \  47 - 169  \  & \ 17 - 57  \   & \ 84 - 295    \\ 
25  & 0 - 10 &\ 56 - 441  \ & \ 13 - 46 \   & \ 79 -  550  \    & 4 - 10  & \  55 - 257  \  & \ 19 - 72  \   & \ 98 - 375    \\ 
30  & 0 - 10 &\ 48 - 564  \ & \ 15 - 56 \   & \ 95 -  700  \    & 4 - 10  & \  44 - 259  \  & \ 21 - 87  \   & \ 104 - 455  \\ 
35  & 0 - 10 &\ 49 - 695  \ & \ 13 - 67 \   & \ 81 -  859  \    & 3 - 10  & \  62 - 336  \  & \ 21 - 102  \  & \ 106 - 535   \\ 
40  & 1 - 10 &\ 54 - 861  \ & \ 16 - 76 \   & \ 101 -  1056  \  & 2 - 10  & \  52 - 384  \  & \ 20 - 117  \  & \ 101 - 614   \\ 
45  & 3 - 10 &\ 42 - 1006 \ & \ 21 - 86 \   & \ 131 -  1233  \  & 5 - 10  & \  69 - 390  \  & \ 24 - 132  \  & \ 121 - 695  \\ 
50  & 2 - 10 &\ 42 - 1163 \ & \ 17 - 96 \   & \ 108 -  1426  \  & 6 - 10  & \  74 - 532  \  & \ 25 - 147  \  & \ 124 - 774   \\ 
55  & 2 - 10 &\ 56 - 1253 \ & \ 16 - 106 \  & \ 101 -  1539  \  & 5 - 10  & \  61 - 587  \  & \ 22 - 162  \  & \ 113 - 855   \\ 
60  & 3 - 10 &\ 52 - 1480 \ & \ 21 - 116 \  & \ 136 -  1808  \  & 8 - 10  & \  98 - 641  \  & \ 30 - 177  \  & \ 154 - 935   \\ 
65  & 1 - 10 &\ 43 - 1629 \ & \ 15 - 127 \  & \ 91 -  1978  \   & 6 - 10  & \  66 - 696  \  & \ 24 - 192  \  & \ 123 - 1014  \\ 
70  & 5 - 10 &\ 50 - 1811 \ & \ 20 - 136 \  & \ 124 -  2198  \  & 9 - 10  & \  109 - 752  \ & \ 33 - 207  \  & \ 169 - 1095  \\ 
75  & 5 - 10 &\ 73 - 1906 \ & \ 22 - 146 \  & \ 139 -  2311  \  & 8 - 10  & \  58 - 807  \  & \ 33 - 222  \  & \ 171 - 1175  \\ 
80  & 0 - 10 &\ 47 - 1952 \ & \ 14 - 156 \  & \ 84 -  2383  \   & 6 - 10  & \  49 - 861  \  & \ 28 - 237  \  & \ 144 - 1254  \\
85  & 5 - 10 &\ 32 - 2429 \ & \ 24 - 167 \  & \ 152 -  2932  \  & 3 - 10  & \  60 - 915  \  & \ 21 - 252  \  & \ 107 - 1332  \\ 
90  & 1 - 10 &\ 61 - 2431 \ & \ 20 - 176 \  & \ 128 -  2944  \  & 7 - 10  & \  83 - 970  \  & \ 29 - 267  \  & \ 147 - 1414  \\ 
95  & 5 - 10 &\ 53 - 2519 \ & \ 20 - 186 \  & \ 130 -  3058  \  & 8 - 10  & \  91 - 1025  \ & \ 30 - 282  \  & \ 152 - 1493  \\ 
100 & 3 - 10 &\ 44 - 2795 \ & \ 19 - 197 \  & \ 119 -  3385  \  & 9 - 10  & \  97 - 1081  \ & \ 34 - 297  \  & \ 178 - 1575 
\end{tabular}
}
\end{table}

\begin{table}
\centering
\caption{Values for the \textit{reachGrid} and \textit{smokersGrid} datasets in the format (\aspmcr - aspmc) for the tests with 500 seconds of time limit.
$\mu$ stands for (rounded) mean, tw. for treewidth, and vert. for vertices related to the dependency graph.
The column \# unsolved contains the number of unsolved instances for the specific size.}
\label{tab:tab_grid}
{\tablefont\begin{tabular}{@{\extracolsep{\fill}}c || c | c | c | c ||c | c | c | c }
\topline
\multicolumn{1}{c}{} & \multicolumn{4}{c}{\textit{reachGrid} \aspmcr - aspmc} & \multicolumn{4}{c}{\textit{smokersGrid} \aspmcr - aspmc} \\
\hline
size \ & \ \# unsolved \ & \ $\mu$ \# bags \ & \ $\mu$ tw. \ & \ $\mu$ \# vert. \ & \ \# unsolved \ & \ $\mu$ \# bags \ & \ $\mu$ tw. \ & \ $\mu$ \# vert. \
\midline
2 & 0 - 0   & \ 10 -  25 \ & \ 4 -  5 \ & \ 15 - 33 \  & 0 - 0  & \ 11 - 27 \   & \ 4 - 9 \     & \ 15 - 39    \\ 
3 & 3 - 0   & \ 20 -  98 \ & \ 5 -  13 \ & \ 28 - 124 \  & 1 - 0  & \ 15 - 22 \   & \ 8 - 22 \    & \ 37 - 105   \\ 
4 & 1 - 10  & \ 25 -  274 \  & \ 10 -  27 \ & \ 62 - 326 \  & 3 - 10 & \ 17 - 24 \   & \ 17 - 41 \   & \ 82 - 203   \\ 
5 & 3 - 10  & \ 31 -  592 \  & \ 14 -  46 \ & \ 88 - 701 \  & 0 - 10 & \ 15 - 27 \   & \ 7 - 66 \    & \ 30 - 333   \\ 
6 & 6 - 10  & \ 27 -  1160 \ & \ 28 -  68 \ & \ 178 - 1337 \  & 3 - 10 & \ 21 - 34 \   & \ 26 - 98 \   & \ 124 - 495  \\ 
7 & 6 - 10  & \ 28 -  2007 \ & \ 36 -  98 \ & \ 231 - 2333 \  & 5 - 10 & \ 23 - 27 \   & \ 32 - 137 \  & \ 158 - 689 \\ 
8 & 6 - 10  & \ 39 -  3305 \ & \ 38 -  127 \ & \ 242 - 3810 \  & 7 - 10 & \ 24 - 608 \  & \ 49 - 186 \  & \ 245 - 915   \\ 
9 & 8 - 10  & \ 56 -  5150 \ & \ 46 -  162 \ & \ 292 - 5906 \  & 6 - 10 & \ 26 - 779 \  & \ 50 - 236 \  & \ 250 - 1173  \\ 
10 & 5 - 10  & \ 35 -  7679 \ & \ 43 -  208 \ & \ 273 - 8777 \ & 8 - 10 & \ 263 - 972 \ & \ 113 - 293 \ & \ 562 - 1463  \\ 
\end{tabular}
}

\end{table}

\begin{figure}
\centering
\begin{subfigure}{0.48\textwidth}
\centering
\resizebox{\resizeGraphFactor\textwidth}{!}{%
\begin{tikzpicture}
\begin{axis}[
    xlabel={Number of Solved Instances},
    ylabel={Cumulative Time (s)},
    legend pos=north west,
    legend style={fill=white, fill opacity=0.6},
    grid style=dashed,
    legend cell align={left},every axis plot/.append style={thick}
    ]

\addplot[color = original_plot] 
    coordinates {
        (0, 0)
        (1, 2)
        (2, 5)
        (3, 8)
        (4, 11)
        (5, 14)
        (6, 17)
        (7, 20)
        (8, 23)
        (9, 26)
        (10, 29)
        (11, 1011)
    };
\addlegendentry{aspmc 100s}
\addplot[color = original_plot, dashed] 
    coordinates {
        (0, 0)
        (1, 3)
        (2, 6)
        (3, 10)
        (4, 13)
        (5, 17)
        (6, 20)
        (7, 24)
        (8, 28)
        (9, 32)
        (10, 36)
        (11, 1011)
    };
\addlegendentry{aspmc 300s}
\addplot[color = original_plot, dotted] 
    coordinates {
        (0, 0)
        (1, 3)
        (2, 6)
        (3, 10)
        (4, 13)
        (5, 17)
        (6, 20)
        (7, 24)
        (8, 28)
        (9, 32)
        (10, 35)
        (11, 114)
        (12, 193)
        (13, 275)
        (14, 358)
        (15, 444)
        (16, 530)
        (17, 617)
        (18, 706)
        (19, 795)
        (20, 885)
        (21, 1011)
    };
\addlegendentry{aspmc 500s}
\addplot[color = tabled_plot] 
    coordinates {
        (0, 0)
        (1, 2)
        (2, 5)
        (3, 8)
        (4, 11)
        (5, 14)
        (6, 17)
        (7, 20)
        (8, 23)
        (9, 26)
        (10, 29)
        (11, 32)
        (12, 35)
        (13, 38)
        (14, 41)
        (15, 44)
        (16, 47)
        (17, 50)
        (18, 53)
        (19, 57)
        (20, 60)
        (21, 63)
        (22, 66)
        (23, 69)
        (24, 72)
        (25, 75)
        (26, 78)
        (27, 81)
        (28, 84)
        (29, 87)
        (30, 91)
        (31, 94)
        (32, 97)
        (33, 100)
        (34, 104)
        (35, 107)
        (36, 110)
        (37, 114)
        (38, 117)
        (39, 121)
        (40, 124)
        (41, 127)
        (42, 131)
        (43, 134)
        (44, 138)
        (45, 142)
        (46, 146)
        (47, 149)
        (48, 156)
        (49, 162)
        (50, 175)
        (51, 188)
        (52, 203)
        (53, 284)
        (54, 365)
        (55, 459)
        (56, 1011)
    };
\addlegendentry{\aspmcr 100s}
\addplot[color = tabled_plot, dashed] 
    coordinates {
        (0, 0)
        (1, 2)
        (2, 5)
        (3, 8)
        (4, 11)
        (5, 14)
        (6, 17)
        (7, 20)
        (8, 23)
        (9, 26)
        (10, 29)
        (11, 32)
        (12, 35)
        (13, 38)
        (14, 41)
        (15, 44)
        (16, 47)
        (17, 50)
        (18, 53)
        (19, 56)
        (20, 59)
        (21, 62)
        (22, 65)
        (23, 69)
        (24, 72)
        (25, 75)
        (26, 78)
        (27, 81)
        (28, 84)
        (29, 88)
        (30, 91)
        (31, 94)
        (32, 97)
        (33, 101)
        (34, 104)
        (35, 108)
        (36, 111)
        (37, 115)
        (38, 118)
        (39, 122)
        (40, 125)
        (41, 129)
        (42, 132)
        (43, 136)
        (44, 139)
        (45, 143)
        (46, 147)
        (47, 153)
        (48, 159)
        (49, 165)
        (50, 176)
        (51, 189)
        (52, 257)
        (53, 329)
        (54, 409)
        (55, 683)
        (56, 1011)
    };

\addlegendentry{\aspmcr 300s}
\addplot[color = tabled_plot, dotted] 
    coordinates {
        (0, 0)
        (1, 2)
        (2, 5)
        (3, 8)
        (4, 11)
        (5, 14)
        (6, 17)
        (7, 20)
        (8, 23)
        (9, 26)
        (10, 29)
        (11, 32)
        (12, 35)
        (13, 38)
        (14, 41)
        (15, 44)
        (16, 47)
        (17, 50)
        (18, 53)
        (19, 56)
        (20, 59)
        (21, 62)
        (22, 65)
        (23, 68)
        (24, 71)
        (25, 74)
        (26, 77)
        (27, 80)
        (28, 83)
        (29, 86)
        (30, 90)
        (31, 93)
        (32, 96)
        (33, 99)
        (34, 102)
        (35, 106)
        (36, 109)
        (37, 112)
        (38, 116)
        (39, 119)
        (40, 122)
        (41, 126)
        (42, 129)
        (43, 133)
        (44, 136)
        (45, 140)
        (46, 144)
        (47, 147)
        (48, 153)
        (49, 160)
        (50, 172)
        (51, 186)
        (52, 201)
        (53, 278)
        (54, 361)
        (55, 458)
        (56, 731)
        (57, 1011)
    };
\addlegendentry{\aspmcr 500s}

\end{axis}
\end{tikzpicture}
}
\caption{\textit{smokersGrid}}
\label{subfig:cactus_smk_grid}
\end{subfigure}%
\hfill
\begin{subfigure}{0.48\textwidth}
\resizebox{\resizeGraphFactor\textwidth}{!}{%
\begin{tikzpicture}
\begin{axis}[
    xlabel={Number of Solved Instances},
    ylabel={Cumulative Time (s)},
    legend pos=north west,
    legend style={fill=white, fill opacity=0.6},
    grid style=dashed,
    legend cell align={left},every axis plot/.append style={thick}
    ]

\addplot[color = original_plot] 
    coordinates {
        (0, 0)
        (1, 3)
        (2, 6)
        (3, 9)
        (4, 12)
        (5, 15)
        (6, 18)
        (7, 21)
        (8, 24)
        (9, 28)
        (10, 31)
        (11, 5009)
    };
\addlegendentry{aspmc 100s}
\addplot[color = original_plot, dashed] 
    coordinates {
        (0, 0)
        (1, 3)
        (2, 6)
        (3, 9)
        (4, 12)
        (5, 15)
        (6, 18)
        (7, 21)
        (8, 24)
        (9, 27)
        (10, 31)
        (11, 5009)
    };
    
\addlegendentry{aspmc 300s}
\addplot[color = original_plot, dotted] 
    coordinates {
        (0, 0)
        (1, 3)
        (2, 6)
        (3, 9)
        (4, 12)
        (5, 15)
        (6, 18)
        (7, 21)
        (8, 24)
        (9, 28)
        (10, 31)
        (11, 5009)    
    };
\addlegendentry{aspmc 500s}

\addplot[color = tabled_plot] 
    coordinates {
        (0, 0)
        (1, 2)
        (2, 5)
        (3, 8)
        (4, 11)
        (5, 14)
        (6, 17)
        (7, 20)
        (8, 23)
        (9, 27)
        (10, 30)
        (11, 33)
        (12, 36)
        (13, 39)
        (14, 42)
        (15, 45)
        (16, 48)
        (17, 51)
        (18, 54)
        (19, 57)
        (20, 60)
        (21, 63)
        (22, 67)
        (23, 70)
        (24, 73)
        (25, 76)
        (26, 79)
        (27, 83)
        (28, 86)
        (29, 89)
        (30, 92)
        (31, 96)
        (32, 99)
        (33, 102)
        (34, 106)
        (35, 109)
        (36, 113)
        (37, 116)
        (38, 120)
        (39, 124)
        (40, 127)
        (41, 131)
        (42, 135)
        (43, 140)
        (44, 144)
        (45, 149)
        (46, 153)
        (47, 158)
        (48, 162)
        (49, 167)
        (50, 172)
        (51, 176)
        (52, 181)
        (53, 186)
        (54, 190)
        (55, 195)
        (56, 203)
        (57, 210)
        (58, 218)
        (59, 226)
        (60, 234)
        (61, 242)
        (62, 251)
        (63, 259)
        (64, 270)
        (65, 282)
        (66, 294)
        (67, 307)
        (68, 320)
        (69, 334)
        (70, 349)
        (71, 365)
        (72, 381)
        (73, 397)
        (74, 414)
        (75, 431)
        (76, 466)
        (77, 504)
        (78, 544)
        (79, 584)
        (80, 625)
        (81, 665)
        (82, 706)
        (83, 749)
        (84, 793)
        (85, 840)
        (86, 889)
        (87, 941)
        (88, 997)
        (89, 1073)
        (90, 1156)
        (91, 1242)
        (92, 1332)
        (93, 1426)
        (94, 1524)
        (95, 5009)
    };
\addlegendentry{\aspmcr 100s}
\addplot[color = tabled_plot, dashed] 
    coordinates {
        (0, 0)
        (1, 2)
        (2, 5)
        (3, 8)
        (4, 11)
        (5, 14)
        (6, 17)
        (7, 21)
        (8, 24)
        (9, 27)
        (10, 30)
        (11, 33)
        (12, 36)
        (13, 39)
        (14, 42)
        (15, 45)
        (16, 48)
        (17, 51)
        (18, 54)
        (19, 57)
        (20, 60)
        (21, 64)
        (22, 67)
        (23, 70)
        (24, 73)
        (25, 76)
        (26, 79)
        (27, 83)
        (28, 86)
        (29, 89)
        (30, 92)
        (31, 95)
        (32, 99)
        (33, 102)
        (34, 105)
        (35, 109)
        (36, 112)
        (37, 116)
        (38, 119)
        (39, 123)
        (40, 126)
        (41, 130)
        (42, 134)
        (43, 138)
        (44, 143)
        (45, 147)
        (46, 151)
        (47, 156)
        (48, 160)
        (49, 165)
        (50, 169)
        (51, 174)
        (52, 179)
        (53, 183)
        (54, 188)
        (55, 194)
        (56, 201)
        (57, 209)
        (58, 217)
        (59, 225)
        (60, 233)
        (61, 242)
        (62, 250)
        (63, 262)
        (64, 273)
        (65, 285)
        (66, 297)
        (67, 311)
        (68, 324)
        (69, 338)
        (70, 353)
        (71, 367)
        (72, 382)
        (73, 398)
        (74, 415)
        (75, 449)
        (76, 487)
        (77, 524)
        (78, 563)
        (79, 603)
        (80, 643)
        (81, 684)
        (82, 726)
        (83, 771)
        (84, 817)
        (85, 864)
        (86, 914)
        (87, 967)
        (88, 1036)
        (89, 1111)
        (90, 1186)
        (91, 1265)
        (92, 1357)
        (93, 1456)
        (94, 1556)
        (95, 1657)
        (96, 1761)
        (97, 1895)
        (98, 2194)
        (99, 5009)
    };
\addlegendentry{\aspmcr 300s}
\addplot[color = tabled_plot, dotted] 
    coordinates {
        (0, 0)
        (1, 2)
        (2, 5)
        (3, 8)
        (4, 11)
        (5, 14)
        (6, 17)
        (7, 20)
        (8, 23)
        (9, 27)
        (10, 30)
        (11, 33)
        (12, 36)
        (13, 39)
        (14, 42)
        (15, 45)
        (16, 48)
        (17, 51)
        (18, 54)
        (19, 57)
        (20, 60)
        (21, 63)
        (22, 66)
        (23, 70)
        (24, 73)
        (25, 76)
        (26, 79)
        (27, 82)
        (28, 85)
        (29, 89)
        (30, 92)
        (31, 95)
        (32, 98)
        (33, 102)
        (34, 105)
        (35, 109)
        (36, 112)
        (37, 116)
        (38, 119)
        (39, 123)
        (40, 127)
        (41, 130)
        (42, 134)
        (43, 139)
        (44, 143)
        (45, 148)
        (46, 152)
        (47, 156)
        (48, 161)
        (49, 165)
        (50, 170)
        (51, 174)
        (52, 179)
        (53, 184)
        (54, 189)
        (55, 193)
        (56, 201)
        (57, 208)
        (58, 216)
        (59, 224)
        (60, 232)
        (61, 241)
        (62, 249)
        (63, 260)
        (64, 270)
        (65, 283)
        (66, 295)
        (67, 308)
        (68, 322)
        (69, 336)
        (70, 350)
        (71, 365)
        (72, 380)
        (73, 397)
        (74, 414)
        (75, 445)
        (76, 478)
        (77, 515)
        (78, 555)
        (79, 595)
        (80, 635)
        (81, 676)
        (82, 721)
        (83, 768)
        (84, 816)
        (85, 865)
        (86, 922)
        (87, 989)
        (88, 1064)
        (89, 1139)
        (90, 1217)
        (91, 1296)
        (92, 1390)
        (93, 1487)
        (94, 1587)
        (95, 1724)
        (96, 1989)
        (97, 2278)
        (98, 2581)
        (99, 2885)
        (100, 3207)
        (101, 3532)
        (102, 3861)
        (103, 4231)
        (104, 4611)
        (105, 5009)        
    };
\addlegendentry{\aspmcr 500s}

\end{axis}
\end{tikzpicture}
}
\caption{\textit{smokersBA}.}
\label{subfig:cactus_smk_ba}
\end{subfigure}
\caption{Cactus plot for aspmc and \aspmcr with 100, 300, and 500 seconds of time on the \textit{smokersGrid} and \textit{smokersBA} datasets.}
\label{fig:cactus_smk}
\end{figure}
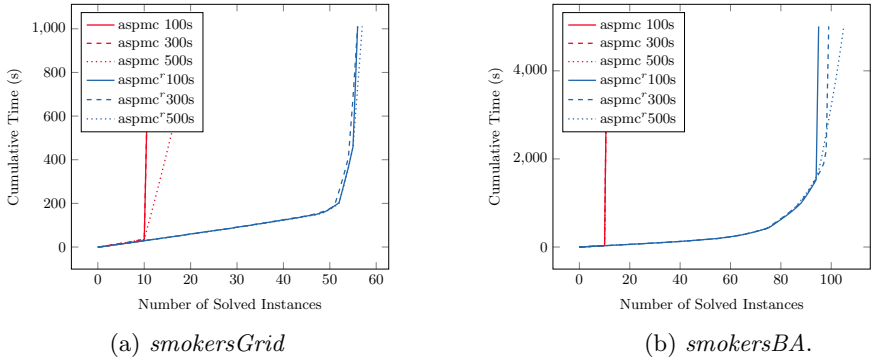

\begin{figure}
\centering
\begin{subfigure}{0.48\textwidth}
\centering
\resizebox{\resizeGraphFactor\textwidth}{!}{%
\begin{tikzpicture}
\begin{axis}[
    xlabel={Number of Solved Instances},
    ylabel={Cumulative Time (s)},
    legend pos=north west,
    legend style={fill=white, fill opacity=0.6},
    grid style=dashed,
    legend cell align={left},every axis plot/.append style={thick}
    ]

\addplot[color = original_plot] 
    coordinates {
        (0, 0)
        (1, 3)
        (2, 6)
        (3, 9)
        (4, 12)
        (5, 15)
        (6, 18)
        (7, 21)
        (8, 24)
        (9, 27)
        (10, 30)
        (11, 39)
        (12, 49)
        (13, 59)
        (14, 68)
        (15, 78)
        (16, 88)
        (17, 98)
        (18, 108)
        (19, 118)
        (20, 130)
        (21, 812)        
    };
\addlegendentry{aspmc 100s}
\addplot[color = original_plot, dashed] 
    coordinates {
        (0, 0)
        (1, 3)
        (2, 6)
        (3, 9)
        (4, 12)
        (5, 15)
        (6, 18)
        (7, 21)
        (8, 24)
        (9, 27)
        (10, 30)
        (11, 39)
        (12, 49)
        (13, 58)
        (14, 68)
        (15, 78)
        (16, 88)
        (17, 98)
        (18, 108)
        (19, 118)
        (20, 130)
        (21, 812)
    };
\addlegendentry{aspmc 300s}
\addplot[color = original_plot, dotted] 
    coordinates {
        (0, 0)
        (1, 2)
        (2, 5)
        (3, 8)
        (4, 11)
        (5, 14)
        (6, 17)
        (7, 20)
        (8, 22)
        (9, 25)
        (10, 28)
        (11, 38)
        (12, 47)
        (13, 56)
        (14, 66)
        (15, 75)
        (16, 85)
        (17, 95)
        (18, 105)
        (19, 115)
        (20, 127)
        (21, 812)        
    };
\addlegendentry{aspmc 500s}
\addplot[color = tabled_plot] 
    coordinates {
        (0, 0)
        (1, 2)
        (2, 5)
        (3, 8)
        (4, 11)
        (5, 14)
        (6, 17)
        (7, 20)
        (8, 23)
        (9, 26)
        (10, 29)
        (11, 32)
        (12, 35)
        (13, 38)
        (14, 41)
        (15, 44)
        (16, 47)
        (17, 50)
        (18, 53)
        (19, 56)
        (20, 59)
        (21, 62)
        (22, 65)
        (23, 69)
        (24, 72)
        (25, 75)
        (26, 78)
        (27, 81)
        (28, 84)
        (29, 87)
        (30, 91)
        (31, 94)
        (32, 97)
        (33, 101)
        (34, 104)
        (35, 107)
        (36, 111)
        (37, 114)
        (38, 118)
        (39, 121)
        (40, 125)
        (41, 128)
        (42, 132)
        (43, 136)
        (44, 140)
        (45, 144)
        (46, 150)
        (47, 180)
        (48, 210)
        (49, 269)
        (50, 366)
        (51, 466)
        (52, 812)
    };
\addlegendentry{\aspmcr 100s}
\addplot[color = tabled_plot, dashed] 
    coordinates {
        (0, 0)
        (1, 2)
        (2, 5)
        (3, 9)
        (4, 12)
        (5, 15)
        (6, 18)
        (7, 21)
        (8, 24)
        (9, 27)
        (10, 30)
        (11, 33)
        (12, 36)
        (13, 39)
        (14, 42)
        (15, 45)
        (16, 48)
        (17, 51)
        (18, 54)
        (19, 57)
        (20, 60)
        (21, 64)
        (22, 67)
        (23, 70)
        (24, 73)
        (25, 77)
        (26, 80)
        (27, 83)
        (28, 87)
        (29, 90)
        (30, 94)
        (31, 97)
        (32, 100)
        (33, 104)
        (34, 107)
        (35, 111)
        (36, 115)
        (37, 118)
        (38, 122)
        (39, 125)
        (40, 129)
        (41, 133)
        (42, 137)
        (43, 140)
        (44, 144)
        (45, 148)
        (46, 155)
        (47, 182)
        (48, 209)
        (49, 269)
        (50, 368)
        (51, 488)
        (52, 812)
    };
\addlegendentry{\aspmcr 300s}
\addplot[color = tabled_plot, dotted] 
    coordinates {
        (0, 0)
        (1, 2)
        (2, 5)
        (3, 8)
        (4, 11)
        (5, 14)
        (6, 17)
        (7, 20)
        (8, 23)
        (9, 26)
        (10, 29)
        (11, 32)
        (12, 35)
        (13, 38)
        (14, 41)
        (15, 44)
        (16, 47)
        (17, 50)
        (18, 53)
        (19, 56)
        (20, 59)
        (21, 62)
        (22, 66)
        (23, 69)
        (24, 72)
        (25, 75)
        (26, 78)
        (27, 81)
        (28, 84)
        (29, 87)
        (30, 90)
        (31, 93)
        (32, 96)
        (33, 100)
        (34, 103)
        (35, 106)
        (36, 110)
        (37, 113)
        (38, 117)
        (39, 120)
        (40, 124)
        (41, 127)
        (42, 131)
        (43, 135)
        (44, 139)
        (45, 142)
        (46, 146)
        (47, 165)
        (48, 191)
        (49, 218)
        (50, 245)
        (51, 311)
        (52, 812)
    };
\addlegendentry{\aspmcr 500s}

\end{axis}
\end{tikzpicture}
}
\caption{\textit{reachGrid}.}
\label{subfig:reach_grid}
\end{subfigure}%
\hfill
\begin{subfigure}{0.48\textwidth}
\resizebox{\resizeGraphFactor\textwidth}{!}{%
\begin{tikzpicture}
\begin{axis}[
    xlabel={Number of Solved Instances},
    ylabel={Execution Time (s)},
    legend pos=north west,
    legend style={fill=white, fill opacity=0.6},
    grid style=dashed,
    legend cell align={left},every axis plot/.append style={thick}
    ]

\addplot[color = original_plot] 
    coordinates {
        (0, 0)
        (1, 2)
        (2, 5)
        (3, 8)
        (4, 11)
        (5, 14)
        (6, 17)
        (7, 20)
        (8, 23)
        (9, 26)
        (10, 30)
        (11, 77)
        (12, 126)
        (13, 176)
        (14, 228)
        (15, 283)
        (16, 339)
        (17, 398)
        (18, 460)
        (19, 534)
        (20, 5504)
    };
\addlegendentry{aspmc 100s}
\addplot[color = original_plot, dashed] 
    coordinates {
        (0, 0)
        (1, 3)
        (2, 7)
        (3, 11)
        (4, 15)
        (5, 19)
        (6, 23)
        (7, 28)
        (8, 32)
        (9, 37)
        (10, 41)
        (11, 88)
        (12, 138)
        (13, 187)
        (14, 240)
        (15, 295)
        (16, 351)
        (17, 411)
        (18, 473)
        (19, 546)
        (20, 658)
        (21, 5504)
    };
\addlegendentry{aspmc 300s}
\addplot[color = original_plot, dotted] 
    coordinates {
        (0, 0)
        (1, 3)
        (2, 7)
        (3, 11)
        (4, 15)
        (5, 19)
        (6, 24)
        (7, 28)
        (8, 32)
        (9, 37)
        (10, 42)
        (11, 49)
        (12, 58)
        (13, 67)
        (14, 76)
        (15, 85)
        (16, 95)
        (17, 105)
        (18, 116)
        (19, 127)
        (20, 144)
        (21, 5504)
    };
\addlegendentry{aspmc 500s}
\addplot[color = tabled_plot] 
    coordinates {
        (0, 0)
        (1, 2)
        (2, 5)
        (3, 8)
        (4, 11)
        (5, 14)
        (6, 17)
        (7, 20)
        (8, 23)
        (9, 26)
        (10, 29)
        (11, 32)
        (12, 35)
        (13, 38)
        (14, 41)
        (15, 44)
        (16, 47)
        (17, 51)
        (18, 54)
        (19, 57)
        (20, 60)
        (21, 63)
        (22, 66)
        (23, 69)
        (24, 72)
        (25, 75)
        (26, 78)
        (27, 81)
        (28, 84)
        (29, 87)
        (30, 90)
        (31, 94)
        (32, 97)
        (33, 100)
        (34, 103)
        (35, 106)
        (36, 109)
        (37, 112)
        (38, 116)
        (39, 119)
        (40, 122)
        (41, 125)
        (42, 128)
        (43, 132)
        (44, 135)
        (45, 138)
        (46, 141)
        (47, 145)
        (48, 148)
        (49, 151)
        (50, 154)
        (51, 158)
        (52, 161)
        (53, 164)
        (54, 168)
        (55, 171)
        (56, 174)
        (57, 178)
        (58, 181)
        (59, 185)
        (60, 188)
        (61, 192)
        (62, 195)
        (63, 199)
        (64, 202)
        (65, 206)
        (66, 209)
        (67, 213)
        (68, 217)
        (69, 221)
        (70, 225)
        (71, 228)
        (72, 232)
        (73, 236)
        (74, 240)
        (75, 244)
        (76, 248)
        (77, 252)
        (78, 256)
        (79, 260)
        (80, 265)
        (81, 269)
        (82, 273)
        (83, 277)
        (84, 281)
        (85, 286)
        (86, 290)
        (87, 294)
        (88, 298)
        (89, 303)
        (90, 307)
        (91, 311)
        (92, 316)
        (93, 321)
        (94, 326)
        (95, 330)
        (96, 335)
        (97, 340)
        (98, 345)
        (99, 350)
        (100, 355)
        (101, 360)
        (102, 365)
        (103, 370)
        (104, 375)
        (105, 380)
        (106, 386)
        (107, 392)
        (108, 398)
        (109, 405)
        (110, 412)
        (111, 419)
        (112, 427)
        (113, 435)
        (114, 443)
        (115, 452)
        (116, 460)
        (117, 469)
        (118, 479)
        (119, 489)
        (120, 499)
        (121, 509)
        (122, 520)
        (123, 531)
        (124, 542)
        (125, 554)
        (126, 568)
        (127, 583)
        (128, 599)
        (129, 615)
        (130, 639)
        (131, 667)
        (132, 697)
        (133, 727)
        (134, 757)
        (135, 789)
        (136, 821)
        (137, 858)
        (138, 901)
        (139, 945)
        (140, 990)
        (141, 1036)
        (142, 1087)
        (143, 1149)
        (144, 1215)
        (145, 1283)
        (146, 1354)
        (147, 5504)
    };
\addlegendentry{\aspmcr 100s}
\addplot[color = tabled_plot, dashed] 
    coordinates {
        (0, 0)
        (1, 2)
        (2, 5)
        (3, 8)
        (4, 11)
        (5, 14)
        (6, 17)
        (7, 20)
        (8, 23)
        (9, 26)
        (10, 29)
        (11, 32)
        (12, 35)
        (13, 38)
        (14, 41)
        (15, 44)
        (16, 47)
        (17, 50)
        (18, 54)
        (19, 57)
        (20, 60)
        (21, 63)
        (22, 66)
        (23, 69)
        (24, 72)
        (25, 75)
        (26, 78)
        (27, 81)
        (28, 84)
        (29, 87)
        (30, 91)
        (31, 94)
        (32, 97)
        (33, 100)
        (34, 103)
        (35, 106)
        (36, 109)
        (37, 113)
        (38, 116)
        (39, 119)
        (40, 122)
        (41, 125)
        (42, 129)
        (43, 132)
        (44, 135)
        (45, 138)
        (46, 142)
        (47, 145)
        (48, 148)
        (49, 151)
        (50, 155)
        (51, 158)
        (52, 161)
        (53, 165)
        (54, 168)
        (55, 172)
        (56, 175)
        (57, 179)
        (58, 182)
        (59, 186)
        (60, 189)
        (61, 193)
        (62, 197)
        (63, 200)
        (64, 204)
        (65, 208)
        (66, 212)
        (67, 216)
        (68, 220)
        (69, 223)
        (70, 227)
        (71, 231)
        (72, 235)
        (73, 239)
        (74, 243)
        (75, 247)
        (76, 252)
        (77, 256)
        (78, 260)
        (79, 264)
        (80, 268)
        (81, 272)
        (82, 276)
        (83, 280)
        (84, 285)
        (85, 289)
        (86, 293)
        (87, 297)
        (88, 302)
        (89, 306)
        (90, 311)
        (91, 316)
        (92, 320)
        (93, 325)
        (94, 330)
        (95, 335)
        (96, 339)
        (97, 344)
        (98, 349)
        (99, 354)
        (100, 359)
        (101, 364)
        (102, 370)
        (103, 375)
        (104, 381)
        (105, 388)
        (106, 395)
        (107, 402)
        (108, 409)
        (109, 416)
        (110, 423)
        (111, 431)
        (112, 440)
        (113, 448)
        (114, 457)
        (115, 465)
        (116, 475)
        (117, 484)
        (118, 494)
        (119, 505)
        (120, 515)
        (121, 526)
        (122, 536)
        (123, 547)
        (124, 559)
        (125, 572)
        (126, 586)
        (127, 602)
        (128, 619)
        (129, 636)
        (130, 658)
        (131, 683)
        (132, 711)
        (133, 740)
        (134, 769)
        (135, 799)
        (136, 830)
        (137, 862)
        (138, 897)
        (139, 941)
        (140, 986)
        (141, 1032)
        (142, 1078)
        (143, 1140)
        (144, 1202)
        (145, 1264)
        (146, 1328)
        (147, 1394)
        (148, 1492)
        (149, 1621)
        (150, 1751)
        (151, 1885)
        (152, 2032)
        (153, 2179)
        (154, 2355)
        (155, 2569)
        (156, 2816)
        (157, 3094)
        (158, 5504)
    };
\addlegendentry{\aspmcr 300s}
\addplot[color = tabled_plot, dotted] 
    coordinates {
        (0, 0)
        (1, 2)
        (2, 5)
        (3, 8)
        (4, 11)
        (5, 14)
        (6, 17)
        (7, 20)
        (8, 23)
        (9, 26)
        (10, 29)
        (11, 32)
        (12, 35)
        (13, 38)
        (14, 41)
        (15, 44)
        (16, 47)
        (17, 50)
        (18, 53)
        (19, 56)
        (20, 59)
        (21, 63)
        (22, 66)
        (23, 69)
        (24, 72)
        (25, 75)
        (26, 78)
        (27, 81)
        (28, 84)
        (29, 87)
        (30, 90)
        (31, 93)
        (32, 96)
        (33, 99)
        (34, 103)
        (35, 106)
        (36, 109)
        (37, 112)
        (38, 115)
        (39, 118)
        (40, 121)
        (41, 125)
        (42, 128)
        (43, 131)
        (44, 134)
        (45, 138)
        (46, 141)
        (47, 144)
        (48, 148)
        (49, 151)
        (50, 154)
        (51, 157)
        (52, 161)
        (53, 164)
        (54, 168)
        (55, 171)
        (56, 175)
        (57, 178)
        (58, 182)
        (59, 185)
        (60, 189)
        (61, 192)
        (62, 196)
        (63, 199)
        (64, 203)
        (65, 207)
        (66, 211)
        (67, 215)
        (68, 219)
        (69, 223)
        (70, 227)
        (71, 231)
        (72, 235)
        (73, 239)
        (74, 243)
        (75, 247)
        (76, 251)
        (77, 255)
        (78, 259)
        (79, 263)
        (80, 267)
        (81, 271)
        (82, 275)
        (83, 279)
        (84, 284)
        (85, 288)
        (86, 292)
        (87, 296)
        (88, 301)
        (89, 305)
        (90, 310)
        (91, 314)
        (92, 319)
        (93, 324)
        (94, 328)
        (95, 333)
        (96, 338)
        (97, 343)
        (98, 348)
        (99, 353)
        (100, 358)
        (101, 363)
        (102, 368)
        (103, 374)
        (104, 379)
        (105, 386)
        (106, 393)
        (107, 400)
        (108, 407)
        (109, 415)
        (110, 422)
        (111, 430)
        (112, 438)
        (113, 447)
        (114, 455)
        (115, 463)
        (116, 473)
        (117, 482)
        (118, 492)
        (119, 502)
        (120, 513)
        (121, 523)
        (122, 533)
        (123, 544)
        (124, 556)
        (125, 569)
        (126, 582)
        (127, 598)
        (128, 615)
        (129, 632)
        (130, 657)
        (131, 685)
        (132, 713)
        (133, 741)
        (134, 771)
        (135, 802)
        (136, 832)
        (137, 864)
        (138, 900)
        (139, 942)
        (140, 986)
        (141, 1032)
        (142, 1081)
        (143, 1141)
        (144, 1201)
        (145, 1263)
        (146, 1325)
        (147, 1387)
        (148, 1481)
        (149, 1610)
        (150, 1742)
        (151, 1880)
        (152, 2020)
        (153, 2162)
        (154, 2342)
        (155, 2559)
        (156, 2807)
        (157, 3065)
        (158, 3396)
        (159, 3770)
        (160, 4164)
        (161, 4607)
        (162, 5054)
        (163, 5504)
    };
\addlegendentry{\aspmcr 500s}

\end{axis}
\end{tikzpicture}
}
\caption{\textit{reachBA}.}
\label{subfig:reach_ba}
\end{subfigure}
\caption{Cactus plot for aspmc and \aspmcr with 100, 300, and 500 seconds of time limit on the \textit{reachGrid} and \textit{reachBA} datasets.}
\label{fig:cactus_reach}
\end{figure}
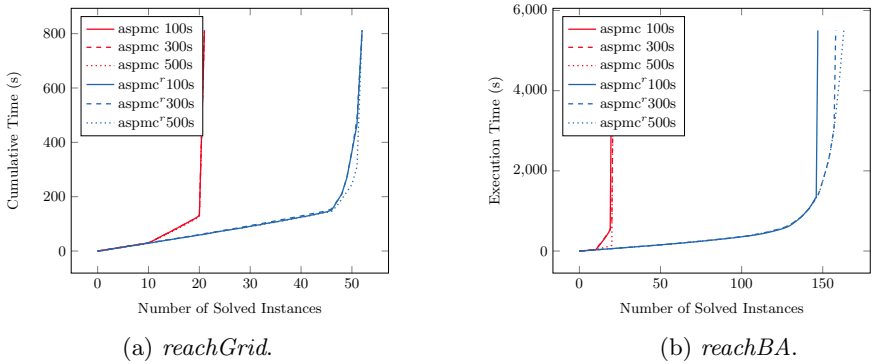

In the following, aspmc denotes the results obtained by applying aspmc directly on the considered instance while \aspmcr\ denotes the results obtained by first computing the residual program and then passing it to aspmc.
For all the experiments the extraction of the residual program was done using the predicate $call\_residual\_program/2$ available in SWI.
The extraction takes less than one second, so we decided to report only the total execution times, without indicating the two components for \aspmcr.
This is also the motivation behind the decision of testing only one Prolog system, namely SWI.
We could have also used  XSB but the results would not have been  much different given the almost instantaneous extraction of the residual program.
Given the probabilistic nature of the generation of Barabasi-Albert graphs and of the query for grid graphs, the results are averaged over 10 runs.
For \textit{reachBA} and \textit{smokersBA}, the query is the same in every run but the structure of the graph changes in every run (i.e., each instance has a different graph structure).
For \textit{reachGrid} and \textit{smokersGrid}, the grid graph is the same in each of the 10 runs but the query changes in each attempt.
Figure~\ref{fig:cactus_smk} and~\ref{fig:cactus_reach} show the cactus plot for the four datasets.
For \textit{smokersGrid} (Figure~\ref{subfig:cactus_smk_grid}), bare aspmc cannot solve more than 20 instances while \aspmcr\ can solve up to 60.
Similar considerations hold for \textit{smokersBA} (Figure~\ref{subfig:cactus_smk_ba}), where \aspmcr\ can go to over 100 instances solved while aspmc stops at 10.
For both datasets, the timeout seems to not influence too much the number of solvable instances, since most of the curves almost coincide.
Analogous considerations apply for \textit{reachGrid} (Figure~\ref{subfig:reach_grid}) and \textit{reachBA} (Figure~\ref{subfig:reach_ba}).
The improvement provided by the residual program extraction can be further assessed from Table~\ref{tab:tab_ba} and Table~\ref{tab:tab_grid}, reporting the number of solved instances and the average number of bags, the average threewidth, and the average number of vertices obtained from the tree decomposition performed by aspmc, with and without residual program extraction.
Note that the result of the treewidth decomposition is available even if the instance cannot be solved within the time limit (i.e., the averages are always over the 10 runs).
For example, for size 80 of the \textit{reachBA} dataset, the number of bags goes from 1956 to 47, after the residual program extraction. 
Overall, the residual program extraction has a huge impact on the simplification of the program, both in terms of more compact representation and in terms of execution time.

\section{Related Works}
\label{sec:related}
The residual program extraction is at the heart of PITA~\citep{riguzzi2011pita} and ProbLog2~\citep{dries2015problog2}, the first adopting Prolog SLG resolution to caching the part of the programs that has already been analyzed.
There are other semantics to represent uncertainty with an answer set program such as LPMLN~\citep{DBLP:conf/aaai/LeeY17}, P-log~\citep{DBLP:journals/tplp/BaralGR09} or smProbLog~\citep{totis_de_raedt_kimmig_2023}.
LPMLN allows defining weighted rules and assigns weights to answer sets while P-log adopts probabilistic facts but requires normalization for the computation of the probability.
Furthermore, P-log has an interface built on top of XSB~\citep{plog_xasp} leveraging its tabling mechanisms to speed up inference.
The relation between the two has been studied in detail~\citep{DBLP:conf/ijcai/BalaiG16,DBLP:conf/aaai/LeeY17}.
Another possibility to associate weights to rules is via weak constraints, available in all ASP solvers, that however cannot be directly interpreted as probabilities.
smProbLog is the semantics closest to the CS: both support probabilistic facts added on top of an ASP.
The probability of a stable model in smProbLog is the probability of its corresponding world $w$ divided by the number of answer sets of $w$.
The CS has also been extended by~\cite{DBLP:conf/kr/RochaC22} to also handle worlds without answer sets, but it requires three truth values (true, false, and undefined).
The residual program extraction may help to speed up inference also in these alternative semantics: exploring this is an interesting future work.
We consider the SLG resolution implemented in SWI Prolog.
However, as already discussed in Section~\ref{subsec:tabling}, it was initially proposed and implemented in the XSB system~\citep{DBLP:journals/tplp/SwiftW12}.
Our approach is general and can be built on top of any Prolog system that supports SLG resolution.

The problem of grounding in ASP has also been addressed by the s(ASP)~\citep{DBLP:journals/corr/abs-1709-00501} and s(CASP)~\cite{arias_carro_salazar_marple_gupta_2018} systems, which are top-down goal-driven ASP interpreters (the latter also allowing constraints).
The result of a query in these systems is a subset of the stable models of the whole program containing only the atoms needed to prove the query.
Furthermore, the evaluation of a query does not need to ground the whole program. 
s(ASP) is based on several techniques combined together, such as coinductive SLD resolution to handle cycles through negation and constructive negation based on dual rules 
to identify why a particular query fails.
DLV~\citep{DBLP:journals/tocl/LeonePFEGPS06} also has also a query mode.
However, none of these systems target probabilistic inference.
Furthermore, our approach does not aim to replace the ASP solver, rather to reduce the size of the program that should be grounded.
Another possibility to extract the residual program is by analyzing the dependency graph.
However, to do so, it is often needed to ground the whole program.
With our approach based on SLG resolution, only the relevant part of the program is grounded. 

\section{Conclusions}
\label{sec:conclusions}
In this paper we proposed to speed up inference in PASP via extraction of the \textit{residual} program.
The residual program represents the part of the program that is needed to compute the probability of a query and it is often smaller than the original program.
This allows a reasoner to ground a smaller portion of the program to compute the probability of a query, reducing the execution time.
We extract the residual program by applying SLG resolution and tabling.
Empirical results on graph datasets shows that i) the time spent to extract the residual program is negligible w.r.t. the inference time and ii) querying the residual program is much faster than querying the original program.

\section*{Acknowledgements} 
This work has been partially supported by Spoke 1 ``FutureHPC \& BigData'' of the Italian Research Center on High-Performance Computing, Big Data and Quantum Computing (ICSC) funded by MUR Missione 4 - Next Generation EU (NGEU) and by Partenariato Esteso PE00000013 - ``FAIR - Future Artificial Intelligence Research'' - Spoke 8 ``Pervasive AI'', funded by MUR through PNRR - M4C2 - Investimento 1.3 (Decreto Direttoriale MUR n. 341 of 15th March 2022) under the Next Generation EU (NGEU).
Both authors are members of the Gruppo Nazionale Calcolo Scientifico -- Istituto Nazionale di Alta Matematica (GNCS-INdAM).

\subsection*{Competing Interests}
The authors declare none.

\bibliographystyle{apalike}
\bibliography{biblio}

\end{document}